\newcommand{\R}{\mathbb{R}}
\newcommand{\E}{\mathop\mathbb{E}}
\newcommand{\cL}{\mathcal{L}}
\newcommand{\norm}[1]{\left\|#1\right\|}
\newtheorem{theorem}{Theorem}
\newtheorem*{theorem*}{Theorem}
\newtheorem*{claim*}{Claim}
\newtheorem{lemma}[theorem]{Lemma}
\newtheorem{corollary}{Corollary}
\newtheorem{claim}[theorem]{Claim}
\newtheorem{prop}[theorem]{Proposition}
\newtheorem*{lemma*}{Lemma}
\newtheorem*{obs*}{Observation}
\newcommand{\eps}{\varepsilon}
\newcommand{\g}{g}
\newcommand{\A}{a}
\newcommand{\invrad}{\rad_\cK^{-1}}
\newcommand{\cB}{\mathcal{B}}
\newcommand{\D}{\mathcal{D}}
\newcommand{\argm}{\mathsf{argmin}}
\newcommand{\N}{\mathbb{N}}
\newcommand{\cK}{\mathcal{K}}
\renewcommand{\cL}{\mathcal{L}}
\newcommand{\rad}{\mathsf{Rad}}
\newcommand{\rep}{\mathsf{Rep}}
\newcommand{\cov}{\mathsf{Cov}}
\newcommand{\cD}{\mathcal{D}}
\newcommand{\ip}[2]{\langle #1,#2 \rangle}
\numberwithin{equation}{section}
\begin{document}

\title[The sample complexity of ERMs in SCO]{The sample complexity of ERMs \\ 
in stochastic convex optimization}



\newcommand{\ignore}[1]{}
\author[Carmon]{Daniel Carmon}
\address{Department of Mathematics, Technion-IIT}
\email{daniel.carmon91@gmail.com}

\author[Livni]{Roi Livni}
\address{Department of Electrical Engineering, Tel Aviv University}
\email{rlivni@tauex.tau.ac.il}
\thanks{R.L. is a recipient of a Google research scholar award and would like to acknowledge his thanks. The research was supported, in part, by an ISF Grant (2188 $\backslash$ 20), as well as supported by the ERC grant (GENERALIZATION, 10139692).}

\author[Yehudayoff]{Amir Yehudayoff}
\address{Department of Computer Science, University of Copenhagen
and Department of Mathematics, Technion-IIT}
\email{amir.yehudayoff@gmail.com}
\thanks{A.Y.\
is part of Basic Algorithms Research Copenhagen supported by the VILLUM Foundation grant 16582,
and also thanks the Pioneer Centre for AI, DNRF grant number P1.}

\begin{abstract}

    Stochastic convex optimization is one of the most well-studied models for learning in modern machine learning. Nevertheless, a central fundamental question in this setup remained unresolved: \begin{quote}
{\em how many data points must be observed so that any 
    empirical risk minimizer (ERM) shows good performance on the true population?}    
    \end{quote}
    This question was proposed by Feldman 
    who proved that $\Omega(\frac{d}{\epsilon}+\frac{1}{\epsilon^2})$ data points are necessary (where~$d$ is the dimension and $\eps>0$ is the accuracy parameter). Proving an
    $\omega(\frac{d}{\epsilon}+\frac{1}{\epsilon^2})$ lower bound was left as an open problem. 
    In this work we show that in fact $\tilde O(\frac{d}{\epsilon}+\frac{1}{\epsilon^2})$ data points are also
    sufficient. 
This settles the question and yields a new separation between ERMs and uniform convergence.


    This sample complexity holds for the classical setup of learning bounded convex Lipschitz functions over the Euclidean unit ball. We further generalize the result and show that a similar upper bound holds for all symmetric convex bodies.
    The general bound is composed of two terms: (i) a 
    term of the form $\tilde O(\frac{d}{\epsilon})$ with an inverse-linear dependence on the accuracy parameter, and (ii) a term that depends on the statistical complexity of the class of \emph{linear} functions (captured by the Rademacher complexity).
    The proof builds a mechanism for controlling the behavior of stochastic convex optimization problems. 
    
\end{abstract}

\maketitle

\section{Introduction}
Stochastic convex optimization (SCO) is a benchmark framework that is widely used for studying stochastic optimization algorithms such as gradient descent and its variants. This is often justified by the simplicity of the framework and the possibility of a rigorous analysis that can hint at the pros and cons of various optimization techniques in practical setups such as machine learning.
It has also been studied in the optimization literature under the name
sample average approximation (SAA);
see for example~\cite{shapiro2003monte,nemirovski2009robust} and references within.

Stochastic convex optimization is particularly useful for understanding the interaction between optimization and generalization in complex scenarios.
The works of \citet*{shalev2009stochastic}, and subsequently \citet{feldman2016generalization}, demonstrated 
how the choice of an algorithm is crucial not only for optimization reasons, but also for generalization reasons. Namely, to avoid overfitting without careful algorithmic choices, one must use
dimension-dependent sample size. On the other hand, with the correct algorithm, one can avoid overfitting with far less data points. For this reason, SCO became a prototypical model for researching \emph{over-paramterization}~~\cite{dauber2020can, amir2021sgd, koren2022benign, sekhari2021sgd}.
These works aim to understand how can algorithms avoid overfitting, even when the number of data points is significantly smaller than the number of free parameters.

In more detail, a classical paradigm for learning is to draw $n$ i.i.d.\ data points $z_1,\ldots, z_n$ from an unknown distribution and to optimize the \emph{empirical risk} defined as
\[ \hat{F}(x) = \frac{1}{n} \sum^n_{i=1} f(x,z_i),\]
where $f(x,z)$ is some loss function that measures the performance of parameter $x$ on the data point~$z$. When $n$ is sufficiently large, a minimizer of $\hat{F}$ should also demonstrate good performance on the population loss $F(x)= \E_{z\sim D}[f(x,z)]$. 
This is exactly the setting of SCO,
expect that SCO makes the additional assumption that the loss functions
are convex (more details are provided below).

It is a well-known fact that in the Probably Approximately Correct framework \cite{vapnik2015uniform, valiant1984theory}, all optimization algorithms share the same statistical rate~\cite{vapnik2015uniform, blumer1989learnability}. 
Namely, 
the statistical rate of {\em all} learning algorithms is the same.
And understanding the rate of \emph{any} learning algorithm is equivalent, then, to understanding the rate of any \emph{specific} ERM.
In stochastic convex optimization, however, not all algorithms are equal.
While there are learning algorithms that perform well with only $n=O\ (\frac{1}{\eps^2} )$ data points \cite{shalev2009stochastic,bousquet2002stability}, others must observe at least $\Omega (\frac{d}{\eps})$ points to guarantee good performance where $d$ is the dimension of the parameter space $x\in \R^d$~\cite{feldman2016generalization}. 

And yet, even though separations between the statistical performances of different ERMs are known, one of the most fundamental questions in stochastic convex optimization remained unanswered: 
\begin{center}
   {\em what is the worst-case sample complexity of ERMs in SCO?}
\end{center}
A standard covering argument (e.g.~\cite{shalev2014understanding,anthony1999neural}) shows it to be at most
$\tilde O (\frac{d}{\eps^2} )$. \citet{feldman2016generalization} demonstrated it
to be at least
$\Omega (\frac{d}{\eps}+\frac{1}{\eps^2} )$.
The gap between the lower bound
and the upper bound remained open, and was proposed as an open problem in~\cite{feldman2016generalization}.

\subsection{Our contribution}
We resolve the afforementioned open problem and complete the picture by proving that the sample complexity
of ERMs in SCO is actually at most
\[ \tilde O \left( \frac{d}{\eps}+\frac{1}{\eps^2} \right).\]

Notice that this bound separates the statistical complexity of ERMs from uniform convergence. The statistical complexity of uniform convergence is the number of examples required so that typically for \emph{every} model in the parameter space, the empirical loss is close to the true population loss.
In SCO, \citet{feldman2016generalization} provided a lower bound 
that matches the well-known upper bound \cite{anthony1999neural, shalev2014understanding} and demonstrated that
the statistical complexity of
uniform convergence
is $\tilde \Theta (\frac{d}{\eps^2} )$. Our result shows that, while different ERMs may exhibit different performances, all ERMs are better than the worst-case uniform convergence rate. Even without algorithmic assumptions, learning in SCO is easier than uniform convergence (distinctively from other classical models such as PAC learning).

A second notable aspect of our result is that, usually, the term $\tfrac{1}{\eps^2}$ corresponds to noise
or to the agnostic setting of learning. The term $\tfrac{d}{\eps}$, on the other hand,
corresponds to a realizable setting.
It is somewhat surprising that these two different types of ``behaviors'' are simultaneously appearing in a single model.
In stochastic convex optimization, the ERM principle somehow intrinsically combines low dimensional agnostic learning
and high-dimensional realizable learning.

The above result is true for learning bounded convex Lipschitz functions with respect to an $\ell_2$-bounded domain. We  generalize our result
and analyze the statistical complexity of ERMs with respect to general norms. 
For any norm and its corresponding unit ball $\cK$, and for learning bounded convex Lipschitz functions, the statistical complexity of ERMs with error $O(\eps)$ is at most
\[ \tilde O \left( \frac{d}{\eps}+ \mathrm{Rad}^{-1}_{\cK}(\eps)\right),\]
where $\mathrm{Rad}^{-1}_{\cK}(\eps)$ measures the statistical complexity of learning over $\cK$ when we restrict the observed losses to be linear. The first term is the dimension-dependent part of the statistical complexity and is the same for all norms.
The second term, which is the dimension-independent part for the $\ell_2$ norm, is not the classical Rademacher complexity of the full class (``convex functions''), but is the Rademacher complexity
of a smaller class
(``linear functions''). 
The above result yields a tight bound for the $\ell_2$ norm, but also yields tight bounds for general $\ell_p$ norms
(\citet{feldman2016generalization} proved matching lower bounds for $\ell_p$ norms).

\section{Formal Setup and Main Result}\label{sec:setup}
Let us formally state the algorithmic problem we explore.
A one-dimensional illustration appears in \Cref{fig:model}. 
We consider a general norm
$\|\cdot\|$ on $\R^d$
and its unit ball $\cK$.
We also consider a general domain~$Z$, which for concreteness we assume to be finite. Our result hold when $Z$ is infinite, but then one needs to carefully 
deal with measurability issues.
We refer to~\cite{bertsekas1973stochastic} for the relevant definitions and formal setup in which our result hold in full generality.

For each $z \in Z$, we consider a function $f_z(x)$ that is convex and $L$-Lipschits with respect to $x$. 
That is, for every $x,y$,
 \[|f_z(x)-f_z(y)| \leq L\norm{x-y}.\]
We are interested in the problem of optimization over the parameter domain $\cK$. We note, though, that there is no loss of generality in assuming $f_z$ is defined over the whole space $\R^d$, because 
every $L$-Lipschitz convex function over $\cK$ can be extended to an $L$-Lipschitz convex function over all of $\R^d$.

We next assume a distribution $\D$ that is supported on the domain $Z$. We consider the convex function~$F$ defined by
$$F(x) = \E_{z \sim \D}[ f_z(x) ].$$
The function $F$ is often referred to as the \emph{population loss} or \emph{true risk},
and is also $L$-Lipschitz. 
The algorithmic goal is to find a minimizer $x^\star$ of $F$.
The set of minimizers of $F$ in $\cK$ is
$$\argm_\cK F =
\{x^\star \in \cK : \forall  x \in \cK \ F(x^\star) \leq F(x)\}.$$

\begin{figure}
  \includegraphics[width=8cm]{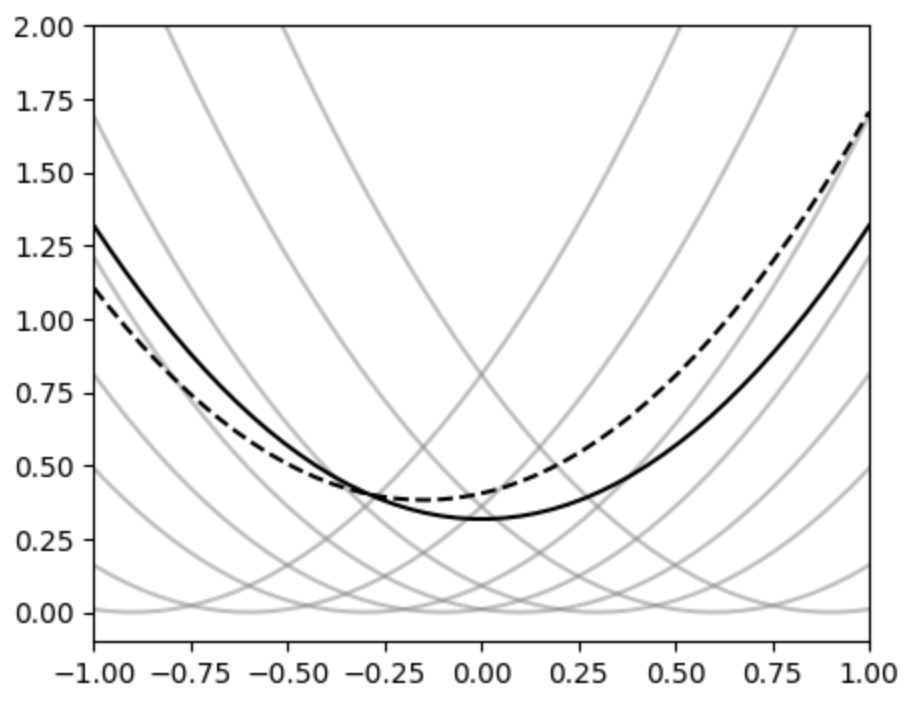}
  \caption{An illustration of a one-dimensional stochastic convex optimization problem.
  The gray lines are the graphs of the different $f_z$'s.
  The solid black line is the target function $F$.
  The minimizer of $F$ is $x^\star=0$.
  The dashed line is an example for a possible $\hat{F}$.
   The minimizer of $\hat{F}$ is $\hat{x} \approx -0.15$.
 }
  \label{fig:model}
\end{figure}
 
 The learning algorithm does know the distribution $\D$. Instead, one assumes that the learner can observe an i.i.d.\ sample $S=(z_1,\ldots, z_n)$ of $n$ examples drawn from the distribution $\D$. A natural strategy in this setting is to follow the ERM principle. Given $S$, we consider the \emph{empirical risk} or \emph{empirical loss}
$$\hat{F} = \frac{1}{n} \sum_{j \in [n]} f_{z_j}.$$
An ERM learning rule is any algorithm that outputs
$$\hat{x} \in \argm_\cK \hat{F}.$$
For an accuracy parameter $\epsilon >0$, the ERM principle is considered successful if 
$$  F(\hat{x}) \leq F(x^\star) + \eps.$$ 
We would like to have the guarantee that 
with as few samples as possible,
any algorithm that follows
the ERM principle is successful.

Our results
provide such a guarantee. 
We first state our result in the cleanest form possible.
Accordingly, the only two parameters 
we focus on are the dimension $d$ and accuracy parameter $\eps$, and we work over the $\ell_2$ norm and its unit ball
$$\cB = \{x \in \R^d : \|x\|_2 \leq 1\}.$$
General norms and other parameters are treated in \cref{sec:main_general} below.

\begin{theorem}\label{thm:main}
Suppose that for every $z\in Z$, the function $f_z$ is  convex, $1$-Lipschitz 
with respect to $\|\cdot\|_2$ and for every $x \in \cB$
we have $|f_z(x)| \leq 1$.
For every distribution $\D$ on $Z$, and for all accuracy parameters $0 < \eps<1$, if
$$n \geq  n_0 = \frac{3d \ln (\frac{40}{\eps})  }{\eps} + \frac{40}{\eps^2}$$
then with probability at least $\frac{3}{4}$
over $S \sim \cD^n$,
for all $x \in \cB$,
$$\hat{F}(x) \leq \hat{F}(x^\star) + \eps \ \
\Longrightarrow \ \
F(x) \leq F(x^\star)+  40\eps.$$
\end{theorem}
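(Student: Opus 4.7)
The plan is to decompose each loss $f_z$ around the population minimizer $x^\star$ into its first-order Taylor approximation plus a non-negative convex residual. The linear part will concentrate at the dimension-free rate $O(1/\sqrt n)$, producing the $1/\eps^2$ term, while the residual will admit a sharp Bernstein-type concentration thanks to its non-negativity, producing the $d/\eps$ term.

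Concretely, I would fix $x^\star \in \argm_\cB F$ and choose a subgradient $g \in \partial F(x^\star)$ in the inward normal cone of $\cB$ at $x^\star$ (which exists by optimality), together with a measurable selection $g_z \in \partial f_z(x^\star)$ realizing $g = \E_{z\sim\D} g_z$. Decompose
\[ f_z(x) - f_z(x^\star) = g_z^T(x - x^\star) + \phi_z(x), \]
so that $\phi_z \geq 0$ by convexity, $\phi_z(x^\star) = 0$, and $\phi_z$ is $2$-Lipschitz with $\phi_z(x) \leq 2\|x - x^\star\|$. Write $\hat g = \tfrac1n \sum g_{z_i}$, and $\hat\Phi,\Phi$ for the empirical and population means of $\phi_z$. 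The $\eps$-ERM condition becomes $\hat g^T(\hat x - x^\star) + \hat\Phi(\hat x) \leq \eps$, and the target $F(\hat x) - F(x^\star) = g^T(\hat x - x^\star) + \Phi(\hat x)$ splits along the same decomposition.

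For the linear part, dimension-free vector concentration (e.g.\ McDiarmid on $\|\hat g - g\|_2$, whose expectation is $O(1/\sqrt n)$) gives $\|\hat g - g\|_2 \leq O(\eps)$ once $n \gtrsim 1/\eps^2$; combined with the optimality consequence $g^T(\hat x - x^\star) \geq 0$ and $\hat\Phi \geq 0$, this forces both $\hat\Phi(\hat x) \leq O(\eps)$ and $g^T(\hat x - x^\star) \leq O(\eps)$. For the non-linear part, I would take a cover $\mathcal{N}$ of $\cB$ at scale $\eps$ of size $(3/\eps)^d$. Non-negativity of $\phi_z(y)$ yields the variance-to-mean bound $\mathrm{Var}(\phi_z(y)) \leq \max_z \phi_z(y) \cdot \Phi(y) \leq 4\Phi(y)$, so Bernstein's lower tail gives $\Pr[\hat\Phi(y) \leq \eps] \leq \exp(-\Omega(n\eps))$ whenever $\Phi(y) \geq C\eps$. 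Union-bounding over $\mathcal{N}$ succeeds as soon as $n \gtrsim d\log(1/\eps)/\eps$; a Lipschitz extension from the nearest cover point to $\hat x$ then converts this into the pointwise conclusion $\Phi(\hat x) \leq O(\eps)$, and combining the two parts yields $F(\hat x) - F(x^\star) \leq O(\eps)$ with constants tuned to match the stated $40\eps$.

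The main conceptual obstacle is spotting the right decomposition: the non-negativity of the convex residual $\phi_z$ is exactly what upgrades a generic Hoeffding bound (which would only give the uniform-convergence rate $d/\eps^2$) to a Bernstein bound with the variance-to-mean inequality $\mathrm{Var}(X) \leq B \cdot \E X$, producing the improved $d/\eps$ rate. The remaining work is technical: verifying the inward-normal-cone property together with a measurable selection $g_z$ realizing $\E g_z = g$, and carefully tracking the Lipschitz and Bernstein constants so that the final conclusion lands at $40\eps$.
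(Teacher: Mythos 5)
Your proposal is correct and is essentially identical to the paper's proof. Your convex residual $\phi_z(x) = f_z(x) - f_z(x^\star) - \langle g_z, x - x^\star\rangle$ is exactly the paper's Bregman divergence $D_{f_z}(x,x^\star)$, your bounds on the linear and residual parts of the $\eps$-ERM condition correspond to the paper's Claims~\ref{clm:lambdaSmall} and~\ref{clm:event}, and the Bernstein step with the variance-to-mean inequality over a cover is the paper's Claim~\ref{clm:prob} plus union bound (the paper controls $\|\hat G - G\|_2$ by a second-moment/Markov argument rather than McDiarmid, but that is cosmetic).
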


The theorem shows that for $n \geq n_0$,
not only that the ERM principle is successful,
but it is also robust in the sense
that it is safe to 
output any point $\hat{x}$ that is merely {\em close} to being a minimizer of~$\hat{F}$.
There is a distinction between ERMs (algorithms
that output a minimizer of $\hat{F}$)
and approximate ERMs (algorithms
that output an approximate minimizer of $\hat{F}$).
This subtle distinction is important
in SCO.
Approximate ERMs
can be more efficient than 
exact ERMs; see~~\cite{shalev2014understanding}.
For example, for $\eps \geq \Omega(1)$,
for all problems, 
sample complexity
of $O(1)$ can be achieved
by a regularized ERM
which is an approximate ERM,
but for some problems,
any exact ERM must have sample complexity
$\Omega(\ln d)$.

\subsection{General norms}\label{sec:main_general}
We next provide a more fine-grained result that generalizes \cref{thm:main} for arbitrary norms and their unit balls. 
We start by introducing the necessary terminology.
Recall that $\norm{\cdot}$ is a norm on $\R^d$ and $\cK \subset \R^d$ is its unit ball.
Denote by $\|\cdot\|_\star$ the dual norm
\[\norm{g}_\star= \sup_{x\in \cK}\ip{g}{x}.\]
The Rademacher complexity of $\cK$ with respect to a sample $S=(g_1,\ldots, g_n) \in (\R^d)^n$ is
\begin{align*}
\rad (\cK, S) = \E_{\sigma}\left[ \norm{\frac{1}{n}\sum_{j=1}^n \sigma_j g_j}_\star\right],\end{align*}
where the expectation is over Rademacher
random variables
$\sigma_1,\ldots,\sigma_n$
(i.e., i.i.d.\ uniform
in $\{\pm 1\}$).
Observe that $\rad(\cK,S)$ corresponds to the standard Rademacher complexity if we think of $\cK$ as a class of linear functions operating on the dual ball;
indeed, for every $S$,
\[    \rad(\cK,S) = \E_{\sigma}\left[\sup_{x \in \cK} \frac{1}{n} \sum_{j=1}^n \sigma_j \ip{g_j}{x}\right].
\]
For an integer $n$,
define
$$\rad(\cK,n)
= \sup_{S} \rad(\cK,S),$$
where the supremum is over $S = (g_1,\ldots,g_n)$ so that $\|g_j\|_\star \leq 1$ for all $j \in [n]$.
For $\eps>0$, define
$$\invrad(\eps) = \min \big\{ n \in \N : \rad(\cK,n)
 < \eps \big\}.$$
It is worth noting that
because
\begin{align*}
\E \left[ \norm{
\frac{1}{n+1}\sum_{j=1}^{n+1} \sigma_j g_j}_\star\right]
& = 
\E \left[ \norm{
\frac{1}{n+1} \sum_{I \subset [n+1]:|I|=n}
\frac{1}{n}\sum_{i \in I} \sigma_j g_j}_\star\right] \\
& \leq 
\frac{1}{n+1} \sum_{I \subset [n+1]:|I|=n}
\E \left[ \norm{
\frac{1}{n}\sum_{i \in I} \sigma_j g_j}_\star\right] ,
\end{align*}
the following monotonicity holds:
$$\rad(\cK,n+1) \leq \rad (\cK,n) .$$
Therefore, if $n \geq \invrad(\eps)$
then $\rad(\cK,n) < \eps$.

The general bound on the sample complexity of ERMs is described in the following theorem.

\begin{theorem}\label{thm:mainG}
Suppose that for every $z\in Z$, the function $f_z$ is  convex, $L$-Lipschitz 
with respect to $\|\cdot\|$ and for every $x \in \cK$
we have $|f_z(x)| \leq c$.
For every distribution $\cD$
on $Z$,
and for all $0 < \eps ,\delta<1$, 
if
$$n \geq  n_0  = \frac{12 c d}{\eps} \ln \Big(\frac{3 L}{\eps} \Big)
+ \invrad\Big( \frac{\eps}{2 L} \Big)+ \frac{8 c^2}{\eps^2} \ln \Big( \frac{4}{\delta} \Big)
$$
then with probability at least $1-\delta$
over $S \sim \cD^n$,
for all $x \in \cK$,
$$\hat{F}(x) \leq \hat{F}(x^\star) + \eps \ \
\Longrightarrow \ \
F(x) \leq F(x^\star)+  40 \eps.$$
\end{theorem}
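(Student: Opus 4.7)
Fix any $\hat x \in \cK$ with $\hat F(\hat x) \leq \hat F(x^\star) + \eps$; the plan is to prove $F(\hat x) - F(x^\star) \leq 40\eps$ on an event of probability at least $1 - \delta$. I start from the telescoping decomposition
\begin{equation*}
F(\hat x) - F(x^\star) = \bigl[F(\hat x) - \hat F(\hat x)\bigr] + \bigl[\hat F(\hat x) - \hat F(x^\star)\bigr] + \bigl[\hat F(x^\star) - F(x^\star)\bigr].
\end{equation*}
The middle bracket is at most $\eps$ by hypothesis, and the third is the deviation of the empirical mean of the bounded i.i.d.\ random variables $f_{z_i}(x^\star)\in[-c,c]$ from its expectation, which Hoeffding's inequality controls by $\eps$ with probability at least $1-\delta/2$ once $n\geq (8c^2/\eps^2)\log(4/\delta)$; this accounts for the third summand of $n_0$. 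The whole difficulty lies in bounding the first bracket, where the dependence of $\hat x$ on the sample prevents pointwise concentration from applying directly.

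For that bracket I linearize. Choose a measurable subgradient $g_z(x)\in\partial f_z(x)$ so that $\hat g(x) := \tfrac{1}{n}\sum_i g_{z_i}(x) \in \partial\hat F(x)$ and $g(x) := \E_{z\sim\D}g_z(x) \in \partial F(x)$, with $\norm{g_z(x)}_\star \leq L$ everywhere. Convexity of $F$ gives $F(\hat x) - F(x^\star) \leq \ip{g(\hat x)}{\hat x - x^\star}$, and convexity of $\hat F$ combined with the approximate optimality of $\hat x$ gives $\ip{\hat g(\hat x)}{\hat x - x^\star} \leq \hat F(\hat x) - \hat F(x^\star) \leq \eps$. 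Subtracting reduces the task to controlling the sample-dependent residual $\ip{g(\hat x) - \hat g(\hat x)}{\hat x - x^\star}$, which I bound by the uniform quantity $\sup_{x\in\cK}\ip{g(x) - \hat g(x)}{x - x^\star}$.

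I control this supremum by combining ingredients that match the remaining two summands of $n_0$. At any fixed $x\in\cK$, symmetrization gives $\E\norm{g(x) - \hat g(x)}_\star \leq 2L\cdot\rad(\cK,n) \leq \eps$ provided $n \geq \invrad(\eps/(2L))$; this is the Rademacher summand. To upgrade to uniform control I pass to an $(\eps/L)$-net $\mathcal N\subset\cK$ of cardinality at most $(3L/\eps)^d$ and exploit the Lipschitzness of each $f_z$ to pay only $O(\eps)$ for the fluctuation between $x$ and its net representative. At the net points themselves, the main obstacle is obtaining the sharp $\tilde O(d/\eps)$ rate instead of the $\tilde O(d/\eps^2)$ rate that a straightforward Hoeffding plus union bound would yield. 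My proposed resolution is to exploit the convexity identity $\ip{g_z(x)}{x - x^\star} \geq f_z(x) - f_z(x^\star)$ together with the upper bound $L\norm{x - x^\star}$ to show that the linear random variables appearing on the cover have variance that shrinks with $\norm{x - x^\star}$; combined with a peeling argument over dyadic shells $\{x:\norm{x - x^\star}\asymp 2^k\eps/L\}$ around $x^\star$, a Bernstein-type inequality yields that the expected supremum is $O(\eps)$ once $n \gtrsim (cd/\eps)\log(3L/\eps)$, while a McDiarmid step concentrates the supremum around its mean at a rate absorbed into the third summand of $n_0$. Summing the three contributions gives $F(\hat x) - F(x^\star) \leq O(\eps)$ with the stated constant $40$.
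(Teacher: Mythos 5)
Your plan diverges from the paper's at the crucial linearization step, and the divergence introduces a genuine gap. You take subgradients at the sample-dependent point $\hat x$ and reduce to controlling $\sup_{x\in\cK}\ip{g(x)-\hat g(x)}{x-x^\star}$. Two things go wrong with that supremum. First, the net-plus-Lipschitz step does not discretize it: Lipschitzness of $f_z$ controls increments of $f_z$, not of a subgradient selection $x\mapsto g_z(x)$, which can jump by as much as $2L$ between points that are only $\eps/L$ apart, so the inner product is not $O(\eps)$-close to its value at a net representative. Second, the variance claim that is supposed to rescue the rate does not hold: at a fixed net point the increments $\ip{g_z(x)}{x-x^\star}$ live in an interval of width $L\norm{x-x^\star}+2c$, and Bhatia--Davis gives a usefully small variance only when the mean sits near an endpoint of that interval, which is not what your proposed bounds deliver. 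Without that, Bernstein plus a union bound over a $(3L/\eps)^{d}$-point net reproduces the $\tilde O(d/\eps^{2})$ uniform-convergence rate rather than $\tilde O(d/\eps)$; indeed the quantity you are trying to bound is essentially the uniform-convergence error whose $\Theta(d/\eps^{2})$ sample complexity is exactly what the theorem is meant to beat.

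The ingredient you are missing is to \emph{freeze} the subgradients at the fixed minimizer $x^\star$ and invoke the stochastic first-order optimality condition $\ip{G}{x-x^\star}\geq 0$ for all $x\in\cK$ (\cref{prop:struc}); your outline never uses this. With $g_z\in\partial f_z(x^\star)$ frozen, the truncated Bregman divergence $T_z(x,x^\star)=f_z(x)-f_z(x^\star)-\max\{-2c,\ip{g_z}{x-x^\star}\}$ is, for each fixed $x$, a non-negative random variable bounded by $4c$, so Bhatia--Davis gives variance $\leq 4c\,\E T_z$ and Bernstein delivers the $\exp(-\Omega(\eps n/c))$ tail needed to pay for the $(3L/\eps)^{d}$ union bound (\cref{clm:probG}). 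The quantity $T_z(x,x^\star)$ is also $2L$-Lipschitz in $x$ because $g_z$ is fixed, so the net argument is valid, and the linear residual $\cL(x)=\E_z\,\ell_{g_z}(\ip{g_z}{x})$ is a fixed $1$-Lipschitz map composed with linear functionals of $x$, so the contraction lemma reduces its uniform control to $\rad(\cK,n)$, which is where the $\invrad(\eps/2L)$ term comes from. Your decomposition of $n_0$ into three matching pieces is the right intuition, but the self-bounding mechanism (non-negativity of the Bregman divergence at the fixed optimum, enabled by first-order optimality) is the engine, and it is absent from your argument.
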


The sample complexity $n_0$
consists of three different terms.
The first term is of the form 
$\frac{cd}{\eps} \ln (\frac{L}{\eps})$,
and also appears in the $\ell_2$ setting.
In a nutshell, 
the $d  \ln (\frac{L}{\eps})$ part comes from the logarithm of the cover number (see~\cref{lem:covNum} below).
The second term $\invrad\big( \frac{\eps}{L} \big)$ 
correspond to learning the special case of linear functions.
It heavily depends on the norm and its unit ball $\cK$.
For example, when $\frac{\eps}{L} \geq \Omega(1)$,
for the $\ell_2$ norm
it is $O(1)$,
for the $\ell_1$ norm
it is 
$O(\ln d)$, and
for the $\ell_\infty$ norm
it is 
$O(d)$.
The third term $O(\frac{c^2}{\eps^2} \ln  (\frac{1}{\delta}))$ 
is the only place the confidence parameter $\delta$ appears in.

\section{On the proof}
We now outline the proof of \cref{thm:main}.
The proof has two central ingredients.
One ingredient is a first-order optimality condition
for stochastic convex optimization problems
(\cref{prop:struc} in \cref{sec:strut}).
The second ingredient connects properties of ERM
to the Bregman divergence and prove a strong concentration bound
for it.
The mechanism underlying these steps may be useful
for other convex optimization problems.
The proof of the theorem is finally given in \cref{sec:proof}.
The proof of~\cref{thm:mainG} requires a couple of additional ideas,
which are explained in~\cref{sec:general}.

A key property is identified in \cref{clm:prob}.
This claim allows to replace the $\frac{d}{\eps^2}$
term which is ``expected to hold''
by the term $\frac{d}{\eps}$ which is ``correct''.
The intuition for this gain is that
distinguishing between a coin with bias $\frac{1}{2}+\eps$
and a coin with bias $\frac{1}{2}-\eps$ requires~$\frac{1}{\eps^2}$
samples,
but distinguishing between a coin with bias $\eps$
and a coin with bias $2\eps$ can be done with only~$\frac{1}{\eps}$
samples.

Where is the coin?
For the simplicity of the exposition,
we assume that the $f_z$'s are smooth and that the minimizer $x^\star$ is in the interior of $\cB$
(we also included an illustration in \Cref{fig:div}).
In this case, $F$ is also smooth and first-order optimality implies that $\nabla F(x^\star) = 0$.
For each $x \in \cB$, we know that $F(x)-F(x^\star) \geq 0$, and
our goal is to control the empirical difference $\hat{F}(x) - \hat{F}(x^\star)$.
This is achieved, in a nutshell, through the non-negativity of the Bregman divergence.
The Bregman divergence associated with $f_z$ between $x$ and $x^\star$ is
$$D_{f_z}(x,x^\star) := f_z(x) - f_z(x^\star) - \ip{\nabla f_z(x^\star)}{x-x^\star}.$$
It is always non-negative, and it is also bounded from above by four.
The key observation is that
$$F(x)-F(x^\star) = D_F(x,x^\star) = \E_{z \sim \D} [D_{f_z}(x,x^\star)] ,$$
which is explained by
$$\E_{z \sim \D} [\nabla f_z(x^\star)] = \nabla F(x^\star) = 0.$$
We also see that ``$x$ is a bad output''
when $D_F(x,x^\star) \geq 2\eps$.
Instead of directly controlling $\hat{F}(x)-\hat{F}(x^\star)$,
we aim to control the empirical Bregman divergence
$$D_{\hat{F}}(x,x^\star) = \hat{F}(x) - \hat{F}(x^\star) - \ip{\nabla \hat{F}(x^\star)}{x-x^\star}.$$
We can think of $D_F(x,x^\star)$ as the bias
of a coin taking values in~$[0,4]$.
The non-negativity and boundedness of the Bergman divergence allows to prove, via Bernestein inequality, that
if $D_{F}(x,x^\star) \geq 2 \eps$ then
$$\Pr[ D_{\hat{F}}(x,x^\star) \leq \eps]
\leq \exp ( - \Omega(\eps n)).$$
This seems pretty close to our goal;
if ``$x$ is a bad output'' then it is very likely that $D_{\hat{F}}(x,x^\star)$ is large.
Because we actually care about $\hat{F}(x)-\hat{F}(x^\star)$
and not $D_{\hat{F}}(x,x^\star)$,
we also need to control $\nabla \hat{F}(x^\star)$.
This can be achieved via a simple concentration argument because
$$\E \big[\|\nabla \hat{F}(x^\star)\|_2^2\big] \leq \frac{1}{n}.$$
By taking a standard union bound over an $\eps$-net $N$ for $\cB$ of size
$|N| \approx (\tfrac{1}{\eps})^{d}$, we get the final bound
$$\Pr[ \exists x \in \cB \ F(x)-F(x^\star) > 4 \eps , \
\hat{F}(x)-\hat{F}(x^\star) < \eps ] \leq \frac{1}{\eps^2 n} + |N| \exp ( - \Omega(\eps n)).$$
This argument actually works also when the $f_z$'s are {\em not} assumed
to be smooth.
For this, we need to replace ``gradients'' by ``subgradients'';
see \cref{sec:strut} below.
But when $x^\star$ is on the boundary of~$\cB$, even in the smooth case,
we do not know that $\nabla F(x^\star)=0$.
In this case, we additionally need to understand what happens when
$D_{F}(x,x^\star) < 2 \eps$. This is achieved in~\cref{clm:lambdaSmall}.

\begin{figure}
  \includegraphics[width=8cm]{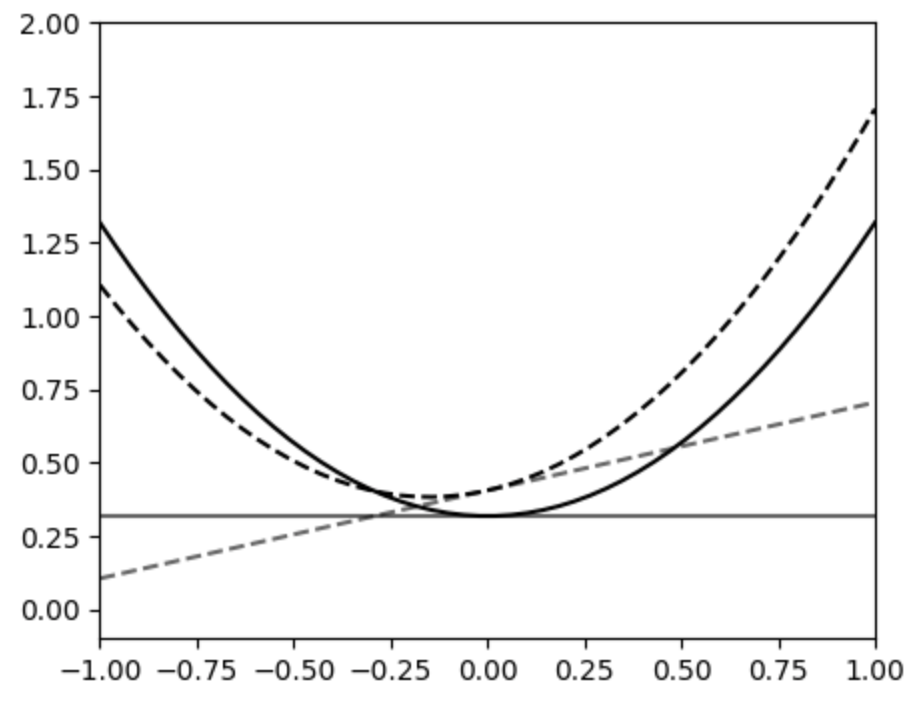}
  \caption{An illustration of the role the Bregman divergence plays in the proof.
  The graphs of $F$ and $\hat{F}$ are from \Cref{fig:model}.
  The solid line is the tangent to $F$ at $x^\star = 0$,
  and the dashed line is the tangent to $\hat{F}$ at $x^\star$.
  The tangents are always below the graphs due to convexity.
  The differences between the graphs and the tangents are the Bregman divergences.
  By controlling the tangent to $\hat{F}$, we can control the behavior of ERM.
  Controlling the tangent to $\hat{F}$ is easier than controlling $\hat{F}$
  because it is an affine function.
 }
  \label{fig:div}
\end{figure}

\section{Preliminaries}
We next develop several preliminary results, and provide the background needed
for the proof.

\subsection{First-order optimality}
\label{sec:strut}
The subgradient of $f :\R^d \to \R$ at $x \in \R^d$ is 
$$\partial f(x) = \{ g \in \R^d : \forall y \in \R^d \ f(y) \geq f(x) + \ip{g}{y-x}\}.$$
If $f$ is convex then the set $\partial f(x)$ is never empty~\cite{rockafellar1997convex}. If $f$ is convex and $L$-Lipschitz with respect to $\|\cdot\|$ then
$$g \in \partial f(x) \ \Longrightarrow \ \|g\|_\star \leq L.$$
A deep property of stochastic functions follows from the subgradient sum property
(because it is a central idea, we included a proof in ~\cref{sec:app}). For a stochastic convex function, the subgradient can be thought as the expectation of the subgradients~\cite{bertsekas1973stochastic}; namely, if $F = \E_{z\sim \D}[f_z],$ then
 for every $g\in \partial F(x)$, 
 for each $z \in Z$,
 there exists $g_z\in \partial f_z(x)$ so that
 $$g= \E_{z\sim \D}[g_z].$$ 
This leads to the following stochastic first-order optimality condition.

\begin{prop}[Stochastic first-order condition]
\label{prop:struc}
Let $\|\cdot\|$ be a norm on $\R^d$ and denote by $\cK$
its unit ball.
Let $\cD$ be a distribution on the finite
set $Z$.
Suppose that for every $z \in Z$, the function $f_z$ is  convex and $L$-Lipschitz. 
Assume that
\[ x^\star \in \argm_\cK F\]
where $F =  \E_{z\sim \D}[f_z]$.
Then, for each $z \in Z$, 
there is $g_z \in \partial f_z(x^\star)$ so that
$\|g_z\|_\star \le L$ and for all $x \in \cK$, 
\begin{equation}\label{eq:foo}
\ip{G}{x-x^\star} \geq 0\end{equation}
where
$G =  \E_{z\sim \D}[g_z] \in \partial F(x^\star)$.
\end{prop}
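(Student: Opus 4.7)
The plan is to combine two standard convex-analytic facts: the first-order optimality condition for minimizing a convex function over a convex set, and the expectation representation of subgradients of stochastic convex functions that the paper has already stated (and will prove in \cref{sec:app}).

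First I would apply constrained first-order optimality. Since $F$ is convex and $x^\star \in \argm_\cK F$ with $\cK$ convex, the standard KKT-type condition for convex constrained minimization (see e.g.\ \cite{rockafellar1997convex}) gives the existence of some $G \in \partial F(x^\star)$ satisfying
\[
\ip{G}{x-x^\star} \geq 0 \qquad \text{for all } x \in \cK.
\]
Equivalently, $-G$ lies in the normal cone to $\cK$ at $x^\star$. To see this directly from definitions: if no such $G$ existed, then for every $G' \in \partial F(x^\star)$ one could find $x \in \cK$ with $\ip{G'}{x - x^\star} < 0$; walking from $x^\star$ toward $x$ inside $\cK$ would then strictly decrease $F$ by the subgradient inequality, contradicting optimality. (Making this rigorous for the case where $x^\star$ lies on the boundary of $\cK$ uses a separation argument between $-\partial F(x^\star)$ and the normal cone of $\cK$ at $x^\star$, both of which are closed convex sets.)

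Second I would invoke the subgradient-expectation property stated immediately before the proposition: for a stochastic convex function $F=\E_{z\sim\D}[f_z]$, every $G \in \partial F(x^\star)$ can be represented as $G = \E_{z \sim \D}[g_z]$ for some selection $g_z \in \partial f_z(x^\star)$. Applying this to the $G$ produced in the first step gives the desired $g_z$'s whose expectation is $G$, and hence inherits \eqref{eq:foo} automatically. Since $Z$ is finite, there are no measurability issues with choosing the selection $z \mapsto g_z$.

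Finally, the bound $\|g_z\|_\star \le L$ is immediate from the assumption that $f_z$ is $L$-Lipschitz with respect to $\|\cdot\|$, via the standard equivalence recorded just above \cref{prop:struc}: any subgradient of an $L$-Lipschitz convex function has dual norm at most $L$. The only genuinely nontrivial step is the first one, and if I were writing this for the paper I would expect the cleanest route to be citing \cite{rockafellar1997convex} for the constrained optimality characterization rather than re-deriving it; everything else is assembly.
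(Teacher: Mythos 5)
Your proposal is correct and follows the same two-step route as the paper: (i) invoke the standard constrained first-order optimality condition from \cite{rockafellar1997convex} to obtain $G\in\partial F(x^\star)$ with $\ip{G}{x-x^\star}\geq 0$ on $\cK$, and (ii) decompose $G=\E_{z\sim\D}[g_z]$ via the subgradient-expectation (sum) property, with the norm bound $\|g_z\|_\star\le L$ following from Lipschitzness. One small caution: your parenthetical heuristic ``walking from $x^\star$ toward $x$ would strictly decrease $F$ by the subgradient inequality'' is logically backwards, since the subgradient inequality only gives a lower bound $F(y)\geq F(x^\star)+\ip{G'}{y-x^\star}$ and so $\ip{G'}{x-x^\star}<0$ does not by itself force a decrease; the correct justification is the separation/normal-cone argument you then defer to, which is exactly what the cited reference (and the paper's own appendix, where it derives this claim from the subgradient sum theorem) supplies.
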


\begin{proof}
The first-order optimality condition 
(see e.g.~\cite{rockafellar1997convex,bubeck2015convex}) applied to $F$ states that there
is $G \in \partial F(x^\star)$ so that for all $x \in \cK$,
$$\ip{G}{x-x^\star} \geq 0.$$
By the subgradient sum property,
there are $g_z\in \partial f_z(x)$ such that $G=\E[g_z]$. Because $f_z$ is $L$-Lipschitz,
we know that $\|g_z\|_\star \leq L$.
\end{proof}

\subsection{The Bregman divergence}
The Bregman divergence measures
the difference between a convex function
and its first-order approximation. 
It is often defined for smooth functions,
but for our purposes, 
we require a definition that makes sense for genercal SCO problems.
For a stochastic function $F= \E_{z\sim \D}[f_z]$ and a point $x^\star$, the Bregman divergence associated with $F$ at $x^\star$ is
\[D(x,x^\star) = D_F(x,x^\star) = F(x)-F(x^\star) - \ip{G}{x-x^\star},\]
for some $G\in \partial F(x^\star)$. This mapping depends on the choice of $G$, but to avoid cumbersome notations we will suppress this dependence. Importantly, we care about the Bregman divergence at $x^\star$ which is a minimizer of $F$, and we assume that $G \in \partial F(x^\star)$ is chosen to be a sub-gradient whose existence is guaranteed by \cref{prop:struc}.

Given a sample $S = (z_1,\ldots,z_n)$, the empirical Bregman divergence is
\[ \hat{D}(x,x^\star) = \hat F(x)- \hat F(x^\star) -\ip{\hat G}{x-x^\star},\]
where $\hat G= \frac{1}{n}\sum_{i=1}^n g_{z_i} \in \partial \hat{F}(x^\star)$. 
Two key observations are that by the convexity of all the $f_z$'s, both $D(x,x^\star)$ and $\hat{D}(x,x^\star)$ are  non-negative, and
\[ \E_{S\sim \D^n}[\hat {D}(x,x^\star)] = D(x,x^\star).\]

\subsection{Covering numbers}
For $\eps>0$, the $\eps$-cover number $\cov_\cK(\eps)$ of $\cK$ is the minimum 
integer $m$ so that there
is a net $N \subset \cK$
of cardinality $|N| = m$
so that for every $x \in \cK$
there is $y \in N$
so that $\|x-y\| \leq \eps$.
The following is a standard bound on the cover numbers of unit balls of norms. 

\begin{lemma}
\label{lem:covNum}
Let $\|\cdot\|$ be a norm on $\R^d$ and denote by $\cK$
its unit ball.
For every $\eps>0$,
$$\cov_\cK(\eps) \leq \Big(\frac{2(1+\eps)}{\eps}\Big)^d.$$
\end{lemma}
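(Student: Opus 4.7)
The plan is to use the classical maximal-packing argument combined with a volume comparison. Let $V$ denote Lebesgue measure on $\R^d$; note that $\cK$, being the unit ball of a norm, is a compact symmetric convex body, so $0 < V(\cK) < \infty$.

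First, I would choose a maximal $\eps$-separated subset $N = \{y_1, \ldots, y_m\} \subset \cK$, that is, a set of maximal cardinality with the property that $\|y_i - y_j\| > \eps$ for all $i \neq j$. (By compactness of $\cK$ such a maximum exists and is finite.) Maximality then immediately yields that $N$ is an $\eps$-cover: for any $x \in \cK$, if $x$ had distance greater than $\eps$ from every $y_i$, then $N \cup \{x\}$ would still be $\eps$-separated, contradicting maximality. Hence $\cov_\cK(\eps) \le m$, and it remains to upper bound $m$.

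Next I would apply the standard volume trick. Since any two centers $y_i, y_j$ satisfy $\|y_i - y_j\| > \eps$, the translates
\[
y_i + \tfrac{\eps}{2}\,\mathrm{int}(\cK), \qquad i=1,\ldots,m,
\]
are pairwise disjoint (if a point $p$ lay in two of them, the triangle inequality would force $\|y_i - y_j\| < \eps$). On the other hand, each such translate is contained in $\cK + \tfrac{\eps}{2}\cK = \bigl(1+\tfrac{\eps}{2}\bigr)\cK$ by convexity of $\cK$. Comparing volumes and using translation-invariance and homogeneity of $V$,
\[
m \cdot \Big(\tfrac{\eps}{2}\Big)^d V(\cK) \;=\; \sum_{i=1}^m V\Big(y_i + \tfrac{\eps}{2}\cK\Big) \;\le\; V\Big(\bigl(1+\tfrac{\eps}{2}\bigr)\cK\Big) \;=\; \Big(1+\tfrac{\eps}{2}\Big)^d V(\cK).
\]
Dividing by $(\eps/2)^d V(\cK)$ gives
\[
m \;\le\; \Big(\tfrac{2+\eps}{\eps}\Big)^d \;\le\; \Big(\tfrac{2(1+\eps)}{\eps}\Big)^d,
\]
which is the claimed bound.

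There is no serious obstacle here; the argument is entirely routine. The only two things I would make sure to justify cleanly are (i) that a maximal $\eps$-separated set in $\cK$ is finite (which follows from compactness of $\cK$ together with the fact that the disjoint half-radius translates sit inside a body of finite volume, giving the bound above a priori), and (ii) that the inclusion $\cK + \tfrac{\eps}{2}\cK \subset (1+\tfrac{\eps}{2})\cK$ uses convexity of $\cK$ in the form $\lambda \cK + (1-\lambda) \cK = \cK$ for $\lambda \in [0,1]$, applied after rescaling.
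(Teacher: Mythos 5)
Your proof is correct and takes essentially the same route as the paper: a maximal $\eps$-separated set in $\cK$, disjointness of the $\tfrac{\eps}{2}\cK$-translates, and a volume comparison against a dilate of $\cK$. The only cosmetic difference is that you compare against $(1+\tfrac{\eps}{2})\cK$ while the paper uses the looser $(1+\eps)\cK$, so you get the slightly sharper intermediate bound $\bigl(\tfrac{2+\eps}{\eps}\bigr)^d$ before relaxing to the stated one.
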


\begin{proof}
Let $Y$ be a subset of $\cK$
of maximum size so that for every $y\neq y'$ in $Y$ we have $\|y-y'\| > \eps$.
It follows that
\begin{align*}
(1+\eps)^d |\cK|
& = |(1+\eps) \cK| \\
& \geq \Big| \bigcup_{y \in Y} \Big(y + \frac{\eps}{2} \cK\Big)\Big| \\
& = |Y| \cdot \Big|\frac{\eps}{2} \cK \Big|
= \Big( \frac{\eps}{2}\Big) ^d \cdot |Y| \cdot |\cK|,     
\end{align*}
where $|\cdot|$ denotes volume.
Because $Y$ is maximal,
for every $x \in \cK$ there is $y \in Y$ so that 
$\|x-y\| \leq 2 \eps$.
\end{proof}

\subsection{Bernstein's inequality}
Finally, we recall the following concentration bound, due to Bernstein:
\begin{lemma*}[Bernstein's inequality]
Let $R_1,\ldots,R_n$ be i.i.d.\ random variables taking values in $[-M,M]$,
each with mean zero and variance $\sigma^2$.
Then, for all $t > 0$, we have
$$\Pr \big[ \sum_{j \in [n]} R_j \geq t \big]
\leq \exp \Big( - \frac{t^2}{2 \sigma^2 n + (2Mt/3)} \Big).$$
\end{lemma*}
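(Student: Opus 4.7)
The plan is to apply the standard Chernoff/exponential-moment method. For any $\lambda > 0$, Markov's inequality applied to $e^{\lambda \sum_j R_j}$ yields
\[\Pr\big[\textstyle\sum_{j\in[n]} R_j \geq t\big] \leq e^{-\lambda t}\prod_{j=1}^n \E[e^{\lambda R_j}] = e^{-\lambda t}\big(\E[e^{\lambda R_1}]\big)^n,\]
using independence and identical distribution. The task therefore reduces to bounding the single-variable moment generating function $\varphi(\lambda) := \E[e^{\lambda R_1}]$ and then optimizing $\lambda$.

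The core estimate I would establish is $\varphi(\lambda) \leq \exp\!\big(\tfrac{\lambda^2 \sigma^2}{2(1-\lambda M/3)}\big)$ for $0 < \lambda < 3/M$. To prove it, expand $e^{\lambda R_1} = 1 + \lambda R_1 + \sum_{k\geq 2}(\lambda R_1)^k/k!$ and take expectations. The linear term vanishes because $\E[R_1] = 0$, and for $k \geq 2$ one has $|\E[R_1^k]| \leq M^{k-2}\E[R_1^2] = M^{k-2}\sigma^2$ since $|R_1| \leq M$ almost surely. Combined with the elementary inequality $k! \geq 2\cdot 3^{k-2}$ for $k \geq 2$ (proved by induction and tight at $k = 2,3$), the resulting series is dominated by a geometric series:
\[\varphi(\lambda) \leq 1 + \frac{\sigma^2\lambda^2}{2}\sum_{k\geq 2}\big(\tfrac{\lambda M}{3}\big)^{k-2} = 1 + \frac{\lambda^2\sigma^2}{2(1-\lambda M/3)}.\]
Applying $1+x \leq e^x$ then gives the displayed bound on $\varphi$.

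Putting everything together, for every $\lambda \in (0,3/M)$,
\[\Pr\big[\textstyle\sum_j R_j \geq t\big] \leq \exp\!\Big(-\lambda t + \frac{n\lambda^2 \sigma^2}{2(1-\lambda M/3)}\Big).\]
I would then plug in the natural choice $\lambda^\star = \frac{t}{n\sigma^2 + Mt/3}$, which lies in $(0,3/M)$: the inequality $\lambda^\star M < 3$ reduces to $n\sigma^2 > 0$. With this choice $1 - \lambda^\star M/3 = n\sigma^2/(n\sigma^2 + Mt/3)$, so a direct substitution collapses the exponent to $-t^2/(2n\sigma^2 + 2Mt/3)$, matching the claim.

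No step is genuinely hard; the only real content is the moment generating function bound. The main bookkeeping to watch is the combinatorial inequality $k! \geq 2\cdot 3^{k-2}$ (which is what produces the precise constant $2/3$ in the denominator of the tail bound) and the algebra when substituting $\lambda^\star$. An alternative route uses the identity $e^x - 1 - x = x^2\int_0^1 (1-s)e^{sx}\,ds$ to bound $\varphi$, but the geometric-series argument above is cleaner and produces the stated constant directly.
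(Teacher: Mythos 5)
Your proof is correct, and it is the standard Chernoff/exponential-moment derivation of Bernstein's inequality. Note, however, that the paper does not actually prove this lemma; it is stated as a classical result ``due to Bernstein'' and invoked as a black box, so there is no in-paper proof to compare against. For what it's worth, the details check out: the moment bound $|\E[R_1^k]| \leq M^{k-2}\sigma^2$ for $k \geq 2$, the combinatorial inequality $k! \geq 2\cdot 3^{k-2}$ (which produces the $M/3$ factor), the resulting geometric-series bound on the moment generating function valid for $0 < \lambda < 3/M$, the verification that $\lambda^\star = t/(n\sigma^2 + Mt/3)$ lies in this range, and the algebra collapsing the exponent to $-t^2/(2\sigma^2 n + 2Mt/3)$ are all correct. (The degenerate case $\sigma^2 = 0$, where $\lambda^\star M/3 = 1$, is trivial since then $R_j = 0$ almost surely.)
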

\section{The Euclidean norm}
In this section, we prove~\cref{thm:main}.
Let $x^\star$ be an arbitrary minimizer of $F$ in $\cB$. Let $G\in \partial F(x^\star)$ be the subgradient given in \cref{prop:struc}. For a fixed sample, we also denote:
\[ \hat G = \frac{1}{n} \sum_{i=1}^n g_{z_i}.\] 

We start the proof by providing both a lower bound on the Bregman divergence at spurious empirical risk minimizers, as well as an upper bound for the empirical Bregman divergence at empirical risk minimizers. 

The following claim says that for a point to be a spurious risk minimizer, the Bregman divergence must be ``large".

\begin{claim}
\label{clm:lambdaSmall}
For all $x \in \cB$, if
$$\hat{F}(x) - \hat{F}(x^\star)\leq 5\eps$$
and
\begin{align*}
\|\hat{G}-G\|_2 \leq \eps
\end{align*}
then
\begin{align*}
D(x,x^\star) \ge F(x)- F(x^\star) - 7\eps .
\end{align*}
\end{claim}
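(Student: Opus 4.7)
The key identity to unpack is that, by definition of $D$, the desired conclusion $D(x,x^\star) \ge F(x) - F(x^\star) - 7\eps$ is equivalent to the upper bound $\ip{G}{x-x^\star} \le 7\eps$. So the whole claim reduces to controlling the linear term $\ip{G}{x-x^\star}$ using the two hypotheses, rather than directly manipulating $D$ or $\hat D$ themselves.

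My plan is to first estimate $\ip{\hat G}{x-x^\star}$ using the non-negativity of the empirical Bregman divergence, which is a straightforward consequence of convexity of $\hat F$: from $\hat D(x,x^\star) \ge 0$ I obtain
\[ \ip{\hat G}{x-x^\star} \le \hat F(x) - \hat F(x^\star) \le 5\eps \]
by the first hypothesis. Then I would bridge $\hat G$ and $G$: writing
\[ \ip{G}{x-x^\star} = \ip{\hat G}{x-x^\star} + \ip{G-\hat G}{x-x^\star}, \]
I bound the second summand by Cauchy--Schwarz as $\|G-\hat G\|_2 \cdot \|x-x^\star\|_2 \le \eps \cdot 2 = 2\eps$, since $x,x^\star \in \cB$ forces $\|x-x^\star\|_2 \le 2$, and the norm bound on $G-\hat G$ is the second hypothesis. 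Adding these two estimates yields $\ip{G}{x-x^\star} \le 7\eps$, which by the reformulation above is the statement of the claim.

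There is no real obstacle here once one notices the reduction; the proof is essentially a three-line calculation combining convexity (to turn $\hat F(x) - \hat F(x^\star)$ into a linear bound via $\hat D \ge 0$), Cauchy--Schwarz, and the Euclidean diameter of $\cB$. The only point worth flagging is that the stochastic first-order condition from \cref{prop:struc} is what legitimizes the choice of $G$ and $\hat G$ as the sub-gradients appearing in $D$ and $\hat D$, and in particular lets one use the convexity inequality $\hat D \ge 0$ with the \emph{same} $\hat G$ against which $\|\hat G - G\|_2$ is assumed small.
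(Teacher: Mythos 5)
Your proposal is correct and is essentially the same argument as the paper's: both decompose $\ip{G}{x-x^\star} = \ip{G-\hat G}{x-x^\star} + \ip{\hat G}{x-x^\star}$, bound the first term by $2\eps$ via Cauchy--Schwarz and the diameter of $\cB$, and bound the second term by $5\eps$ using $\hat D(x,x^\star) \ge 0$ together with the first hypothesis. The only difference is the order in which the two summands are estimated, which is immaterial.
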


\begin{proof}
\ignore{
It is helpful to view the Bregman divergence as the difference
between the target function and it first-order approximation by an affine function.
Denote by $\A_z$ the affine function
$$\A_z(x) = f_z(x^\star) + \ip{\g_z}{x-x^\star}$$
and denote by $A$ the affine function
$$A(x) = \E_{z \sim \D} \A_z(x) = F(x^\star) + \ip{G}{x-x^\star}.$$
It follows that
$$D(x,x^\star) = F(x) - A(x).$$
}

Because
\begin{align*}
\ip{G}{x-x^\star}
& = \ip{G - \hat{G}}{x-x^\star}
+ \ip{\hat{G}}{x-x^\star} \\
& \leq 2 \eps 
+ \ip{\hat{G}}{x-x^\star} \\
& = 2 \eps 
+ \hat{F}(x^\star) + \ip{\hat{G}}{x-x^\star}
- \hat{F}(x^\star) \\
&= 2 \eps 
+ \hat{F}(x) -D(x,x^\star)
- \hat{F}(x^\star) \\
& \leq 2 \eps 
+ \hat{F}(x) - \hat{F}(x^\star)
 & \hat{D}(x,x^\star)\geq 0 \\
 & \leq 7 \eps ,
\end{align*}
we can bound
\begin{align*}
D(x,x^\star)
& = F(x) - F(x^\star) - \ip{G}{x-x^\star} \\
& \geq F(x) - F(x^\star) - 7 \eps.
\end{align*}

\ignore{
If we denote by $\hat{A}$ the empirical analog of $A$, then one can see that:
\begin{align}\label{eq:ahata}
A(x) - \hat{A}(x)
& = F(x^\star) -\hat{F}(x^\star) + \ip{G-\hat{G}}{x-x^\star}
 \leq \eps + 2 \eps .
\end{align}
Therefore:
\begin{align*}
F(x)
& = A(x) + D(x,x^\star) \\
& \leq \hat{A}(x) + 3 \eps + D(x,x^\star) &\textrm{\cref{eq:ahata}}\\
& \leq \hat{F}(x) + 3 \eps + D(x,x^\star) &\textrm{convexity of $\hat F$}\\
& \leq \hat{F}(x^\star) + 4 \eps + D(x,x^\star) &\hat F(x)-\hat F(x^\star)\le \epsilon\\
& \leq F(x^\star) + 5 \eps + D(x,x^\star). &\hat F(x^\star)- F(x^\star)\le \epsilon
\end{align*}
The result now follows by rearranging terms.}
\end{proof}

Next, we relate data of the form ``$x$ is an ERM''
to an upper bound on the empirical Bregman divergence.

\begin{claim}
\label{clm:event}
For all $x \in \cB$, if
\begin{align*}
\hat{F}(x) - \hat{F}(x^\star) \leq 5 \eps
\end{align*}
and
\begin{align*}
\| \hat{G}-G\|_2 \leq \eps
\end{align*}
then
\begin{align*}
\hat{D}(x,x^\star)  \leq 7 \eps  .
\end{align*}
\end{claim}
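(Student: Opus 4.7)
The plan is straightforward: unpack the definition of $\hat{D}(x,x^\star)$ and exploit the first-order optimality condition from \cref{prop:struc} together with the two hypotheses. This claim is essentially the dual counterpart of \cref{clm:lambdaSmall}: there we bounded $D$ from below by moving between $\hat{G}$ and $G$, and here we will bound $\hat{D}$ from above by the same kind of move in the other direction.

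Concretely, I would start by writing
\[
\hat{D}(x,x^\star) = \hat{F}(x) - \hat{F}(x^\star) - \ip{\hat{G}}{x-x^\star},
\]
and then split the inner product as
\[
\ip{\hat{G}}{x-x^\star} = \ip{\hat{G}-G}{x-x^\star} + \ip{G}{x-x^\star}.
\]
The second term is the place where the first-order optimality of $x^\star$ enters: because $G \in \partial F(x^\star)$ is the specific subgradient produced by \cref{prop:struc}, we have $\ip{G}{x-x^\star} \geq 0$ for all $x \in \cB$, so $-\ip{G}{x-x^\star} \leq 0$. For the first term, I would apply Cauchy--Schwarz together with the diameter bound $\|x-x^\star\|_2 \leq 2$ (valid since $x, x^\star \in \cB$) and the hypothesis $\|\hat{G}-G\|_2 \leq \eps$ to get
\[
\bigl|\ip{\hat{G}-G}{x-x^\star}\bigr| \leq \|\hat{G}-G\|_2 \cdot \|x-x^\star\|_2 \leq 2\eps.
\]

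Combining, $-\ip{\hat{G}}{x-x^\star} \leq 2\eps$, and adding the assumption $\hat{F}(x)-\hat{F}(x^\star) \leq 5\eps$ yields $\hat{D}(x,x^\star) \leq 7\eps$, as claimed. There is no real obstacle here; the only subtlety worth highlighting is that $G$ must be the specific subgradient guaranteed by \cref{prop:struc} (for which the first-order inequality holds), and correspondingly $\hat{G}$ must be the empirical average of the associated $g_{z_i}$'s used in the definition of $\hat{D}$—this matching is already baked into the conventions set up in the Bregman-divergence subsection, so no extra bookkeeping is required.
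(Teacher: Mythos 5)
Your proof is correct and follows essentially the same route as the paper: both unpack the definition of $\hat{D}$, split $\ip{\hat{G}}{x-x^\star}$ into $\ip{\hat{G}-G}{x-x^\star}+\ip{G}{x-x^\star}$, drop the second term via the first-order condition \cref{eq:foo}, bound the first by Cauchy--Schwarz with the diameter-$2$ bound, and add in the hypothesis on $\hat{F}(x)-\hat{F}(x^\star)$. The only difference is the order in which the empirical-loss hypothesis is invoked, which is immaterial.
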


\begin{proof}
\ignore{
\begin{align*}
5 \eps
& \geq \hat{F}(x) - \hat{F}(x^\star) \\
& = \hat{F}(x) - \hat{A}(x) + \hat{A}(x) - \hat{F}(x^\star) \\
& = \hat{D}(x,x^\star)  + \ip{\hat{G}}{x-x^\star}  \\
& = \hat{D}(x,x^\star)  + \ip{\hat{G}-G}{x-x^\star} +\ip{G}{x-x^\star}  \\
& \geq \hat{D}(x,x^\star)  -2 \eps.  & \cref{eq:foo} 
\end{align*}
}
\begin{align*}
\hat{D}(x,x^\star) &= \hat{F}(x) - \hat{F}(x^\star) - \ip{\hat{G}}{x-x^\star}\\
&\leq 5\epsilon - \ip{\hat{G}}{x-x^\star}\\
&=5\epsilon - \ip{\hat{G}-G}{x-x^\star} - \ip{G}{x-x^\star} \\
&\leq 5\epsilon - \ip{\hat{G}-G}{x-x^\star}\\
&\leq 7\epsilon .
 & \cref{eq:foo} 
\end{align*}
\end{proof}
So far, we showed that for a point to be a spurious empirical risk minimizer, the Bregman divergence must be large yet the empirical Bregman divergence is small. The last claim we need is a concentration bound that relates the empirical and true Bregman divergences.

\begin{claim}
\label{clm:prob}
Suppose that for each $z \in Z$, the function $f_z$ is convex, $1$-Lipschitz 
with respect to $\|\cdot\|_2$,
and for all $x \in \cB$
we have $|f_z(x)|\leq 1$.
Then, for every $x\in \cB$,
$$\Pr\Big[\hat{D}(x,x^\star) \leq  \frac{D(x,x^\star)}{2} \Big]  \leq \exp \Big( - \frac{D(x,x^\star)  n}{40} \Big),$$
where the probability is over $z_1,\ldots,z_n$ i.i.d.\
samples from $\cD$.
\end{claim}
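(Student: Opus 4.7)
The plan is to let $W_i = D_{f_{z_i}}(x, x^\star) = f_{z_i}(x) - f_{z_i}(x^\star) - \langle g_{z_i}, x - x^\star\rangle$ so that $\hat{D}(x,x^\star) = \frac{1}{n}\sum_{i=1}^n W_i$ and, by the observation recorded before the claim, $\E[W_i] = D(x,x^\star) =: \mu$. The $W_i$ are i.i.d.\ and non-negative by convexity (since $g_{z_i}\in \partial f_{z_i}(x^\star)$). For the upper bound, the assumptions give $|f_{z_i}(x) - f_{z_i}(x^\star)| \le 2$, and $1$-Lipschitzness gives $\|g_{z_i}\|_2 \le 1$, while $\|x-x^\star\|_2 \le 2$ on $\cB$, so $|\langle g_{z_i}, x - x^\star\rangle| \le 2$. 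Therefore $W_i \in [0,4]$, and consequently $\mu \in [0,4]$ as well.

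The crucial step, which implements the ``small-bias coin'' intuition described in the proof outline, is a variance bound that is linear rather than constant in $\mu$: since $0 \le W_i \le 4$,
\[
\mathrm{Var}(W_i) \le \E[W_i^2] \le 4\,\E[W_i] = 4\mu.
\]
This is what allows the lower tail to concentrate at the scale $\mu$ and produces the exponent $-\Theta(\mu n)$ rather than the weaker $-\Theta(\mu^2 n)$ one would get from Hoeffding.

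Now apply Bernstein's inequality to the centered variables $R_i := \mu - W_i$, which are i.i.d., mean zero, bounded by $M = 4$, and have variance $\sigma^2 \le 4\mu$. The event $\{\hat{D}(x,x^\star) \le \mu/2\}$ is exactly $\{\sum_{i=1}^n R_i \ge n\mu/2\}$, so with $t = n\mu/2$ the quoted Bernstein bound gives
\[
\Pr\bigl[\hat{D}(x,x^\star) \le \tfrac{\mu}{2}\bigr]
\le \exp\!\left(-\frac{(n\mu/2)^2}{2\cdot 4\mu\cdot n + \tfrac{2}{3}\cdot 4 \cdot (n\mu/2)}\right)
= \exp\!\left(-\frac{n^2\mu^2/4}{8\mu n + \tfrac{4}{3}\mu n}\right)
= \exp\!\left(-\frac{3 n\mu}{112}\right).
\]
Finally, a trivial comparison of constants ($3/112 > 1/40$, since $3\cdot 40 = 120 > 112$) yields $\exp(-3n\mu/112) \le \exp(-n\mu/40)$, which is the stated bound.

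The only conceptual step is the variance computation $\mathrm{Var}(W_i) \le 4\E[W_i]$; everything else is bookkeeping. There is no real ``obstacle''—the slight loss of constants in Bernstein is the reason for the $1/40$ in the statement, and indeed it goes through comfortably.
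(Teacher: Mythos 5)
Your proof is correct and follows essentially the same route as the paper: decompose $\hat{D}$ as an average of i.i.d.\ Bregman terms in $[0,4]$, bound the variance by $4\mu$, and apply the one-sided Bernstein inequality. The only cosmetic difference is that you derive the variance bound from the elementary observation $W^2\le 4W$ on $[0,4]$, whereas the paper cites the Bhatia--Davis inequality (giving the same $4\mu$), and you track the Bernstein constants slightly more tightly ($3/112$ vs.\ the paper's relaxed $1/40$).
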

\begin{proof}
  The random variable
\begin{align*}
\hat{D}(x,x^\star)
= \frac{1}{n} \sum_j D_{z_j}(x,x^\star)
\end{align*}
is the average of $n$ i.i.d.\ variables $R_1,\ldots,R_n$, each taking values in $[0 ,4]$
with expectation $\lambda:=D(x,x^\star)$. 
By the Bhatia–Davis inequality, the variance of the random variables is at most $$(4-\lambda)\lambda \leq 4 \lambda.$$
By Bernstein's inequality,
\begin{align*}
\Pr\Big[\hat{D}(x,x^\star)  \leq \frac{\lambda}{2} \Big]
& = \Pr\Big[ \sum_j (R_j - \lambda ) \leq - \frac{\lambda n}{2} \Big] \\
& \leq \exp \Big( - \frac{\lambda^2 n^2}{4 ( 8 \lambda n + 2 \lambda n)} \Big) \\
& \leq \exp \Big( - \frac{\lambda  n}{40} \Big) .
\end{align*}
\end{proof}

%

\subsection{Putting it together}
\label{sec:proof}

Assume that $n \geq n_0$.
We start with a simple claim that controls the behavior of the linear part, and we show that:
\begin{equation}\label{clm:linear} \Pr[\|\hat{G}-G\|^2_2 > \eps^2 ] \leq \frac{4}{\eps^2 n}.
\end{equation}
Indeed, 
\begin{align*}
\E \|\hat{G}-G\|^2_2
& = \frac{1}{n^2} \sum_{j,j'} \E \ip{\g_{z_j} -G }{\g_{z_{j'}} -G}  \\
& = \frac{1}{n^2} \sum_{j} \E \|\g_{z_j} -G \|^2 \\
& \leq \frac{4}{n}.
\end{align*}
Now, \cref{clm:linear} follows from Markov's inequality.

Next, denote by $E$ the event that there is
$x \in \cB$ so that
$$\hat{F}(x) - \hat{F}(x^\star)\leq \eps$$
but
$$F(x) - F(x^\star)> 40 \eps.$$
To prove the final result, we need to show that
\begin{equation}\label{eq:thanfive} \Pr(E)\le \frac{1}{4} .\end{equation}
Towards this, denote by $H$ the event that
\begin{align*}
\|\hat{G}-G\|_2 \leq \eps  .
\end{align*}
Next, let $N$ be an $\frac{\eps}{3}$-net for $\cB$ of size at most $(12/\eps)^d$.
Denote by $\hat N$ the set of points $x \in N$ so that
$$\hat{F}(x)- \hat{F}(x^\star) \leq 5 \eps$$
but
$$F(x) -  F(x^\star) > 35 \eps.$$
Because $F$ and $\hat{F}$ are Lipschitz,
the net property implies that if $E$ holds then $\hat N$ is a non-empty set.
By \cref{clm:lambdaSmall},
conditioned on $H$, if $x\in \hat N$ then
$$D(x,x^\star)   \geq 16 \eps.$$
On the other hand, for every $x$, conditioned on $H$, if $x\in \hat{N}$ then by \cref{clm:event} we have
$$\hat{D}(x,x^\star) \leq  8 \eps .$$
Let $E_x$ be the event that $\hat D(x,x^\star)\le 8\eps$. We obtain, therefore:
$$E\cap H  \subseteq \bigcup_{x \in N : D(x,x^\star) > 16 \eps} E_x.$$
By \cref{clm:prob},
if $D(x,x^\star) > 16 \eps$ then
$$\Pr[E_x] \leq \exp \Big( - \frac{\eps  n}{3} \Big).$$
By the union bound, \cref{lem:covNum}, \cref{clm:linear}, and because $n \geq n_0$,
\begin{align*}
\Pr[E]
& \leq \Pr[\neg H] +
|N| \exp \Big( - \frac{\eps  n}{3} \Big) \\
& \leq \frac{4}{\eps^2 n}
+ \Big( \frac{12}{\eps} \Big)^d \exp \Big( - \frac{\eps  n}{3} \Big) \\
&\le \frac{1}{4} .
\end{align*}
This completes the proof of~\cref{thm:main}.

\section{General norms}
\label{sec:general}

In this section, we prove~\cref{thm:mainG}.
Our first task is to handle the Lipschitz constant $L$
and the bound $c$ on the $\ell_\infty$ norm. 
To properly handle general $L$ and $c$, we need to introduce a truncated version of the Bregman divergence.
Second, we address general confidence parameters. This requires a a uniform concentration result for the gradient at
the optimum, combined with an additional argument that bounds a contractive term that may depend on it. 

For the rest of this section,
we fix a norm $\|\cdot\|$
and its unit ball $\cK$.
We also fix an optimal point $x^\star$, and using \cref{prop:struc} we fix subgradients $g_z \in \partial f_z(x^\star)$  
such that the first-order condition \cref{eq:foo} holds for $G= \E_{z\sim \D}[g_z]$.
As before, we denote by $\hat{G}$
the empirical subgradient
$$\hat{G} = \tfrac{1}{n} \sum_{j=1}^n g_{z_j}.$$

\subsection{The Lipschitz and $\ell_\infty$ constants}
Special care is required
when $c \ll L$.
The reason is that the Bregman divergence can be as large as $L$ regardless of $c$.
We, therefore, introduce a truncated version of the Bregman divergence.
The truncated divergence associated with $f : \cK \to [-c,c]$ is
$$T_f(x,x^\star) = f(x) - f(x^\star) - \max\{-2c,\ip{g}{x-x^\star}\}.$$
The truncated divergence, as the Bregman divergence, is always non-negative. But, in contrast to the Bregman divergence, it is always at most $4c$ because $f$ is bounded by $c$.
Similarly to the Bregman divergence, we use the following notation:
$$T_z = T_{f_z}  , \ T = \E_{z \sim \D} T_z \ \ \text{and} \ \ \hat{T} = \frac{1}{n} \sum_{j=1}^n T_{z_j}.$$
The random variable
$\hat{T}(x,x^\star)$
is the average of $n$ i.i.d.\ variables 
taking values in $[0 ,4c]$
with expectation~$T(x,x^\star)$.
Analogously to \cref{clm:prob},
we can deduce that:

\begin{claim}
\label{clm:probG}
For every $x \in \cK$,
$$\Pr\Big[\hat{T}(x,x^\star) \leq  \frac{T(x,x^\star)}{2} \Big]  \leq \exp \Big( - \frac{T(x,x^\star)  n}{40 c} \Big) .$$
\end{claim}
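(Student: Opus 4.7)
The plan is to mimic the proof of \cref{clm:prob} almost verbatim, replacing the Bregman divergence with its truncated version and scaling everything by $c$. The key point is that, by construction, each $T_{z_j}(x,x^\star)$ is a bounded non-negative random variable whose expectation equals $T(x,x^\star)$, so Bernstein's inequality applies directly.

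First, I would verify that $T_{z_j}(x,x^\star) \in [0, 4c]$ almost surely. For non-negativity, a short case split on whether $\ip{g_{z_j}}{x - x^\star} \geq -2c$ or not: if yes, the max equals the inner product and we recover the ordinary Bregman divergence, which is non-negative by convexity; if no, then $T_{z_j}(x,x^\star) = f_{z_j}(x) - f_{z_j}(x^\star) + 2c \geq -c - c + 2c = 0$ using $|f_{z_j}| \leq c$. For the upper bound, $f_{z_j}(x) - f_{z_j}(x^\star) \leq 2c$ and $-\max\{-2c, \ip{g_{z_j}}{x-x^\star}\} \leq 2c$, which together give $T_{z_j}(x,x^\star) \leq 4c$.

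Second, I would bound the variance. By the Bhatia--Davis inequality, since $T_{z_j}(x, x^\star) \in [0, 4c]$ with mean $\lambda := T(x,x^\star)$, its variance is at most $\lambda(4c - \lambda) \leq 4c\lambda$.

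Finally, I would apply Bernstein's inequality to the centered sum $\sum_j (T_{z_j}(x,x^\star) - \lambda)$ with range parameter $M = 4c$ and variance bound $\sigma^2 = 4c\lambda$, at deviation $t = \lambda n/2$:
\begin{align*}
\Pr\Big[\hat{T}(x,x^\star) \leq \tfrac{\lambda}{2}\Big]
& \leq \exp\Big( - \frac{(\lambda n/2)^2}{2 \cdot 4c\lambda \cdot n + (2 \cdot 4c \cdot \lambda n/2)/3}\Big) \\
& \leq \exp\Big( - \frac{\lambda^2 n^2/4}{10 c \lambda n} \Big)
= \exp\Big( - \frac{\lambda n}{40 c} \Big),
\end{align*}
where in the second line I bound $8 + 4/3 \leq 10$ in the denominator so that the exponent matches the form in the statement. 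No step is a real obstacle here; the entire purpose of introducing the truncation is to secure the bounded range $[0, 4c]$ that makes Bernstein yield the right $c$-scaling, so the only place one could plausibly slip is the non-negativity case split.
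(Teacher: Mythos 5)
Your proof is correct and takes essentially the same approach as the paper: both establish the $[0,4c]$ range of the truncated divergence, invoke Bhatia--Davis to bound the variance by $4c\lambda$, and apply Bernstein at deviation $t = \lambda n/2$ with a mild rounding of the constant in the denominator to reach $\exp(-\lambda n/(40c))$. The only difference is that you spell out the case split for non-negativity of $T_{z_j}$, which the paper justifies in the surrounding text rather than inside the proof.
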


We now want to develop analgoue statements to \cref{clm:lambdaSmall} and \cref{clm:event}. For that it will be convenient to add the following notation
\begin{equation}\label{eq:lz} \ell_g(a) = \max\{-2c, a- \ip{g}{x^\star}\} .
\end{equation}
The function $\ell_g: \R \to \R$
is always convex and $1$-Lipschitz.
Denote by $\cL$
and $\hat{\cL}$ the following
functions:
\[ \cL(x)= \E_{z\sim \D}[\ell_{g_z}(\ip{g_z}{x})] \quad\mathrm{ and}\quad \hat \cL(x)= \frac{1}{n} \sum_{j=1}^n \ell_{g_{z_j}}(\ip{g_{z_j}}{x}).\]
Both $\cL$ and $\hat \cL$ are convex. 
For a sample $S = (z_1,\ldots,z_n)$,
the {\em representativeness}\footnote{That is the terminology from the book~\cite{shalev2014understanding}.} of $S$ is defined to be
$$\rep(S) = \sup_{x\in \cK} (\cL(x)- \hat \cL(x) ) . $$

\begin{claim}
\label{clm:LsmallG}
Suppose that
$\rep(S)\leq 2\eps$.
Then, for every $x\in \cK$, if
\begin{align*}
\hat{F}(x) - \hat{F}(x^\star) \leq 6 \eps
\end{align*}
then
$$\hat{T}(x,x^\star)  \leq 8 \eps.$$
\end{claim}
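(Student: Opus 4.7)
The plan is to unpack the definition of $\hat{T}$, isolate the linear ``tangent'' piece, and control it via the representativeness assumption combined with the first-order optimality at $x^\star$. The truncation at $-2c$ in the definition of $\ell_g$ is not needed for this particular claim; only the one-sided inequality $\max\{-2c,a\}\geq a$ will play a role here.

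First I would rewrite the summand. Directly from \eqref{eq:lz},
\[\ell_{g_z}(\ip{g_z}{x}) = \max\{-2c,\ip{g_z}{x-x^\star}\},\]
so that $T_{z}(x,x^\star)= f_{z}(x)-f_{z}(x^\star)-\ell_{g_{z}}(\ip{g_{z}}{x})$. Averaging over the sample yields the identity
\[\hat{T}(x,x^\star) = \hat{F}(x) - \hat{F}(x^\star) - \hat{\cL}(x).\]
Combined with the hypothesis $\hat{F}(x)-\hat{F}(x^\star)\leq 6\eps$, the claim reduces to showing the lower bound $\hat{\cL}(x) \geq -2\eps$.

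Next I would invoke representativeness. By definition of $\rep(S)$, the assumption $\rep(S)\leq 2\eps$ gives $\hat{\cL}(x) \geq \cL(x)-2\eps$ for every $x\in\cK$. So it is enough to verify that $\cL(x)\geq 0$. For this, use $\max\{-2c,a\}\geq a$ pointwise to get $\ell_{g_z}(\ip{g_z}{x}) \geq \ip{g_z}{x-x^\star}$ for every $z$ and $x$. Taking the expectation over $z\sim\cD$ and using linearity,
\[\cL(x) \geq \E_{z\sim\cD}[\ip{g_z}{x-x^\star}] = \ip{G}{x-x^\star},\]
with $G=\E_{z\sim\cD}[g_z]\in\partial F(x^\star)$ the subgradient supplied by \cref{prop:struc}. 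The first-order condition \cref{eq:foo} states $\ip{G}{x-x^\star}\geq 0$ for all $x\in\cK$, so $\cL(x)\geq 0$ as required.

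Chaining these three steps,
\[\hat{T}(x,x^\star) \;=\; \hat{F}(x)-\hat{F}(x^\star)-\hat{\cL}(x) \;\leq\; 6\eps - \bigl(\cL(x)-2\eps\bigr) \;\leq\; 8\eps.\]
There is no real obstacle: the claim is essentially a bookkeeping step that transfers the near-optimality hypothesis $\hat F(x)-\hat F(x^\star)\leq 6\eps$ into an upper bound on the empirical truncated divergence, via the uniform control $\rep(S)\leq 2\eps$ on the ``linear part'' and the first-order optimality at $x^\star$. The only care needed is to observe that replacing the true tangent by its truncation at $-2c$ can only increase $\ell_g$, so it does not affect the one-sided bound $\cL(x)\geq\ip{G}{x-x^\star}$ that we invoke.
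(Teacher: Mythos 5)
Your proof is correct and follows essentially the same route as the paper: rewrite $\hat T(x,x^\star)=\hat F(x)-\hat F(x^\star)-\hat\cL(x)$, apply the hypothesis and representativeness, and then use first-order optimality to conclude $\cL(x)\geq 0$. The only cosmetic difference is that you apply the pointwise bound $\max\{-2c,a\}\geq a$ before taking the expectation, whereas the paper pushes the max outside the expectation by convexity first; both arrive at $\cL(x)\geq\ip{G}{x-x^\star}\geq 0$.
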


\begin{proof}
\begin{align*}
\hat T(x,x^\star) &= \hat{F}(x) - \hat{F}(x^\star) -\hat \cL(x)\\
& \le 6 \eps +\cL(x)-\hat \cL(x) -\cL(x) & \rep(S)\leq 2\eps \\
& \leq 8 \eps - \E_{z\sim \D} \max\{-2c, \ip{g_{z}}{x-x^\star}\} \\
& \leq 8 \eps -  \max\{-2c, \E_{z\sim \D}\ip{g_{z}}{x-x^\star}\}  \\
& \leq 8 \eps -  \ip{G}{x-x^\star} &\cref{eq:foo} \\
& \leq 8 \eps .  
\end{align*}
\end{proof}

\begin{claim}\label{clm:eventG}
Suppose that 
$\rep(S)\leq 2\eps$.
For all $x \in \cK$ if
\begin{equation*}
\hat{F}(x) - \hat{F}(x^\star)\leq 5 \eps
\end{equation*}
then
\begin{align*}
T(x,x^\star) \ge F(x)- F(x^\star) -7\eps .
\end{align*}
\end{claim}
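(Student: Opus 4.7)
The plan is to mimic the structure of Claim \ref{clm:LsmallG} but in the opposite direction: bound $T(x,x^\star)$ from below using the relation $T(x,x^\star)=F(x)-F(x^\star)-\cL(x)$, which follows directly from the definition of the truncated divergence together with the observation that $\max\{-2c,\ip{g_z}{x-x^\star}\}=\ell_{g_z}(\ip{g_z}{x})$ when we use the shift by $\ip{g_z}{x^\star}$ built into \cref{eq:lz}. Thus the task reduces to showing the population linear surrogate satisfies $\cL(x)\le 7\eps$.

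To bound $\cL(x)$, I would first pass to its empirical version via representativeness: $\cL(x)\le \hat\cL(x)+\rep(S)\le \hat\cL(x)+2\eps$. Then I would decompose $\hat\cL(x)=\hat F(x)-\hat F(x^\star)-\hat T(x,x^\star)$ using the empirical analog of the same identity. Since the truncated divergence of each $f_z$ is pointwise non-negative (this is the key place we use convexity: by the subgradient inequality $f_z(x)-f_z(x^\star)\ge\ip{g_z}{x-x^\star}$, and separately $f_z(x)-f_z(x^\star)\ge-2c$ by the $c$-bound), we have $\hat T(x,x^\star)\ge 0$, and the assumption $\hat F(x)-\hat F(x^\star)\le 5\eps$ gives $\hat\cL(x)\le 5\eps$.

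Chaining these estimates yields $\cL(x)\le 5\eps+2\eps=7\eps$, hence
\[
T(x,x^\star)=F(x)-F(x^\star)-\cL(x)\ge F(x)-F(x^\star)-7\eps,
\]
which is the desired conclusion. The argument is essentially the dual of \cref{clm:LsmallG}: there one used the first-order optimality $\ip{G}{x-x^\star}\ge 0$ to turn a bound on $\hat T$ into a bound on $\hat F$, whereas here we only need non-negativity of $\hat T$ in the other direction and a single application of $\rep(S)\le 2\eps$ to cross from empirical to population.

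I do not anticipate a real obstacle; the only subtlety worth flagging is making sure the identity $T(x,x^\star)=F(x)-F(x^\star)-\cL(x)$ (and its empirical counterpart) is stated cleanly, because it is what allows the proof to avoid ever touching $G$ or the first-order condition directly. The first-order optimality is implicitly used only through the choice of subgradients $g_z$ defining $\ell_{g_z}$, which is fixed at the start of Section \ref{sec:general}.
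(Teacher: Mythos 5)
Your proof is correct and follows essentially the same chain as the paper: both start from the identity $T(x,x^\star)=F(x)-F(x^\star)-\cL(x)$, pass from $\cL(x)$ to $\hat\cL(x)$ via $\rep(S)\le 2\eps$, use $\hat\cL(x)=\hat F(x)-\hat F(x^\star)-\hat T(x,x^\star)$ together with $\hat T\ge 0$ and the hypothesis $\hat F(x)-\hat F(x^\star)\le 5\eps$ to get $\cL(x)\le 7\eps$, and then rearrange. The only cosmetic difference is that you spell out why the truncated divergence is non-negative (subgradient inequality plus the $c$-bound), a fact the paper states once earlier and then invokes without comment.
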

\begin{proof}
Because
\begin{align*}
\cL(x)
& = \cL(x) - \hat{\cL}(x) + \hat{\cL}(x) & \rep(S)\leq 2\eps\\
& \leq 2 \eps + \hat{\cL}(x) \\
& = 2 \eps + \hat{F}(x^\star) + \hat{\cL}(x) -  \hat{F}(x^\star)\\
& \leq 2 \eps + \hat{F}(x)-\hat{T}(x,x^\star) -  \hat{F}(x^\star) \\
& \leq 2 \eps + \hat{F}(x) -  \hat{F}(x^\star) & \hat{T}(x,x^\star) \geq 0 \\
& \leq 7 \eps ,
\end{align*}
we can bound
\begin{align*}
T(x,x^\star)
& = F(x) - F(x^\star) - \cL(x) \\
& \geq F(x) - F(x^\star) - 7 \eps.
\end{align*}

\ignore{

\begin{align*}
F(x)
& = F(x^\star) +\cL(x) + T(x,x^\star) \\
& \leq \hat F(x^\star) + \hat \cL(x) + 3 \eps + T(x,x^\star) &\cref{eq:assspurious} \\
& = \hat{F}(x) -\hat T(x) + 3 \eps + T(x,x^\star) \\
& \le \hat{F}(x)  + 3 \eps + T(x,x^\star) & \hat T(x,x^\star)\ge 0 \\
& \leq \hat{F}(x^\star) + 4 \eps + T(x,x^\star) &\hat F(x)-\hat F(x^\star)\le \epsilon\\
& \leq F(x^\star) + 5 \eps + T(x,x^\star). &\hat F(x^\star)- F(x^\star)\le \epsilon 
\end{align*}}
\end{proof}

\subsection{The linear part}
\label{sec:concBall}

In this section,
we explain how to control 
$\rep(S)$.
We are going to use standard results on 
the Rademacher complexity,
and for that we use the book~\cite{shalev2014understanding}.
In the book, the Rademacher complexity of a set $A \subset \R^n$ is defined to be (and denoted by $R$):
$$\rad(A) = \E_{\sigma}
\left[ \sup_{a \in A} \frac{1}{n} \sum_{j=1}^n \sigma_j a_j \right].$$
For a fixed sample $S = (z_1,\ldots,z_n)$,
we consider the set
$$A = A_S = \{ 
(\ip{g_{z_1}}{x},\ldots,\ip{g_{z_n}}{x}) \in \R^n : x \in \cK\}$$
so that
$$\rad(A) = \rad(\cK,S).$$
We also use the $1$-Lipschitz maps $\ell_{g }: \R \to \R$ that are defined in~\cref{eq:lz}. 
We denote by $\ell = \ell_S$ the
sequence $\ell = (\ell_{g_{z_1}},\ldots,\ell_{g_{z_n}})$, 
and consider the set
$$\ell \circ A
= \{ (\ell_{g_{z_1}}(\ip{g_{z_1}}{x}),\ldots,\ell_{g_{z_n}}(\ip{g_{z_n}}{x})) \in \R^n : x \in \cK\}.$$
It follows from Rademacher calculus \cite[see Lemma 26.9 (contraction lemma)]{shalev2014understanding} that
\begin{align}
\label{eqn:radL}
\rad(\ell\circ A,S) \le  \rad(A) = \rad(\cK,S).    
\end{align}
We also rely on the McDiarmid’s inequality following (see e.g.~\cite[Lemma 26.4]{shalev2014understanding}).

\begin{lemma}[McDiarmid’s inequality]
Let $h : Z^n \to \R$  be so that for some $c_0>0$, for all $j \in [n]$
and for all $z_1,\ldots,z_n,z'_i \in Z$,
\begin{align}
\label{eqn:mcdia}
| h(z_1,\ldots,z_n) - h(z_1,\ldots,z_{i-1},z'_i,z_{i+1},\ldots,z_n)| \leq c_0.
\end{align}
Let $\cD$ be a distribution on $Z$.
Then, for all $\delta >0$,
$$\Pr_{S \sim \cD^n}\left[|h(S) - \mu| 
> c_0 \sqrt{\frac{\ln(2/\delta) n}{2}} \right]
\leq \delta$$
where $\mu = \E_{S \sim \cD^n} [h(S)]$.
\end{lemma}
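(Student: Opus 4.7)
The plan is to establish McDiarmid's inequality via the Azuma--Hoeffding approach applied to the Doob martingale associated with $h$. Let $S = (z_1,\ldots,z_n)$ with $z_i \sim \cD$ i.i.d., and set $M_i = \E[h(S) \mid z_1,\ldots,z_i]$, so that $M_0 = \mu$ and $M_n = h(S)$. Writing $M_n - M_0 = \sum_{i=1}^n D_i$ with $D_i = M_i - M_{i-1}$, the goal reduces to a concentration estimate for the sum of these martingale differences.

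The first key step is to verify that, conditionally on $z_1,\ldots,z_{i-1}$, the random variable $D_i$ lies in an interval of length at most $c_0$. Define
$$g(w) = \E_{z_{i+1},\ldots,z_n \sim \cD}\bigl[h(z_1,\ldots,z_{i-1},w,z_{i+1},\ldots,z_n)\bigr],$$
so that $M_i = g(z_i)$ and $M_{i-1} = \E_{w \sim \cD}[g(w)]$. The bounded-differences hypothesis \cref{eqn:mcdia}, applied pointwise inside the expectation defining $g$, yields $|g(w)-g(w')| \leq c_0$ for all $w,w' \in Z$; hence the conditional range of $D_i = g(z_i) - \E_{w}[g(w)]$ is at most $c_0$.

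The second step is Hoeffding's lemma: any zero-mean random variable supported in an interval of length $c_0$ has conditional MGF bounded by $\E[e^{\lambda D_i} \mid z_1,\ldots,z_{i-1}] \leq e^{\lambda^2 c_0^2 / 8}$. Iterating this bound over $i$ via the tower property gives $\E[e^{\lambda (M_n - M_0)}] \leq e^{n \lambda^2 c_0^2/8}$, and a Chernoff-style optimization over $\lambda > 0$ yields
$$\Pr[M_n - M_0 > t] \leq \exp\bigl(-2t^2/(n c_0^2)\bigr).$$
Applying the same bound to $-h$ and taking a union bound gives the two-sided tail $2\exp(-2t^2/(nc_0^2))$; plugging in $t = c_0\sqrt{n\ln(2/\delta)/2}$ produces exactly the stated inequality.

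The main obstacle, such as it is, lies in verifying the bounded-range claim for $D_i$: one must note that the bounded-differences condition is preserved after averaging over the coordinates $z_{i+1},\ldots,z_n$, which uses only linearity of expectation together with the pointwise hypothesis (and crucially the i.i.d.\ structure, so the conditional law of the tail coordinates does not depend on the value of $z_i$). The rest is routine martingale/Chernoff bookkeeping, and the constant $2$ in the exponent comes from the sharp range-based form of Hoeffding's lemma rather than any variance estimate.
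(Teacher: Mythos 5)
Your proof is correct and follows the standard Doob-martingale / Azuma--Hoeffding route for McDiarmid's inequality. The paper does not prove this lemma itself but cites it as a known result (Lemma 26.4 in Shalev-Shwartz and Ben-David), where essentially the same martingale argument appears; your reconstruction matches that standard approach, including the key step of transferring the bounded-differences hypothesis to the conditional range of each martingale increment, which indeed relies on the i.i.d.\ structure as you note.
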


\ignore{
\begin{theorem}
Suppose that $\nu$ is a distribution over vectors $g$ so that 
almost surely, $|\ell_g(x)|\le c$
for all $x \in \cK$.
Then, with probability at least $1-\delta$, for all $x\in \cK$,
    \[ \cL(x) -\hat \cL(x) \le 2 \E_{S\sim \nu^n} \rad(\ell\circ \cK,S) + c\sqrt{\frac{2\ln \frac{2}{\delta}}{n}}.\]
\end{theorem}
}

We obtain the following important corollary
(see also~\cite[Theorem 26.5]{shalev2014understanding}).
\begin{corollary}\label{cor:radbound}
Suppose that for each $z \in Z$, the function $f_z$ is convex and $L$-Lipschitz with respect to $\|\cdot\|$
and for every $x \in \cK$,
we have $|f_z(x)| \leq c$.
    For every distribution $\D$ over $Z$ and $\delta \ge 0$,
   with probability at least $1-\delta$
    over $S \sim \cD^n$, 
\[ \rep(S)  \leq 2 L \cdot \rad(\cK, n) +  c\sqrt{\frac{2\ln (2/\delta)}{n}}.\]
\end{corollary}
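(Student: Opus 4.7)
The plan is to combine two standard ingredients: a concentration step via McDiarmid's inequality, and the classical symmetrization-plus-contraction argument to bound the expectation of $\rep(S)$ by the Rademacher complexity of $\cK$.

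First, I will view $\rep(S)$ as a function $h : Z^n \to \R$ and verify that it has bounded differences. For any fixed $z \in Z$ and $x \in \cK$, the subgradient inequality $f_z(x) \ge f_z(x^\star) + \ip{g_z}{x-x^\star}$ combined with $|f_z| \le c$ yields $\ip{g_z}{x-x^\star} \le 2c$, while the explicit truncation at $-2c$ in \cref{eq:lz} gives a matching lower bound; hence $|\ell_{g_z}(\ip{g_z}{x})| \le 2c$. Replacing a single coordinate of $S$ therefore changes $\hat\cL(x)$ uniformly in $x \in \cK$ by at most $O(c/n)$, so the same holds for $h$. McDiarmid's inequality then yields, with probability at least $1-\delta$,
\[ \rep(S) \le \E[\rep(S)] + c\sqrt{\tfrac{2\ln(2/\delta)}{n}}. \]

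Next, I will bound $\E[\rep(S)]$ by the standard symmetrization argument: introducing an independent ghost sample and Rademacher signs $\sigma_1,\ldots,\sigma_n$ yields
\[ \E_S[\rep(S)] \le 2\,\E_S\,\rad(\ell_S \circ A_S,S). \]
The contraction inequality~\eqref{eqn:radL}, applicable because each map $\ell_g : \R \to \R$ is $1$-Lipschitz, gives $\rad(\ell_S \circ A_S,S) \le \rad(A_S)$. Finally, \cref{prop:struc} ensures $\|g_{z_j}\|_\star \le L$ for every $j$, so by homogeneity of the dual norm,
\[ \rad(A_S) = L\cdot \E_\sigma\Big[\Big\|\tfrac{1}{n}\sum_j \sigma_j (g_{z_j}/L)\Big\|_\star\Big] \le L\cdot \rad(\cK,n), \]
since the rescaled sample $(g_{z_j}/L)_j$ lies in the admissible set for $\rad(\cK,n)$.

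Together, the two steps give the stated bound. The main points to handle carefully are the bounded-differences constant used in McDiarmid (which relies on the uniform bound $|\ell_g(\ip{g}{x})| \le O(c)$ derived from the subgradient inequality together with the truncation) and the dual-norm rescaling that passes from the concrete realization $A_S$ to the supremum $\rad(\cK,n)$ over unit-dual-norm samples. Both reduce to unpacking the definitions given in \cref{sec:setup} and \cref{sec:general}, and neither requires any new idea beyond the tools already assembled in the excerpt.
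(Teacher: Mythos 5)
Your proof follows the paper's argument step for step: McDiarmid's inequality for concentration, symmetrization plus the contraction inequality~\eqref{eqn:radL} to bound $\E[\rep(S)]$ by the Rademacher complexity, and a rescaling by $L$ using $\|g_z\|_\star \le L$ to pass from $\rad(A_S)$ to $L\cdot\rad(\cK,n)$. The only thing to flag is the bounded-differences constant: your (correct) bound $|\ell_{g_z}(\ip{g_z}{x})| \le 2c$ means the per-coordinate range has width $4c$, so $c_0 = 4c/n$, and plugging this into the paper's McDiarmid lemma gives the deviation term $2c\sqrt{2\ln(2/\delta)/n}$, a factor of two larger than the corollary states; you swept this under an $O(c/n)$ and then wrote the target constant. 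The paper's own proof is equally inconsistent here (it asserts $|\ell_{g_z}| \le 4c$ yet takes $c_0 = 2c/n$), so this is inherited slack rather than a flaw in your approach, and it only affects absolute constants downstream.
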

\begin{proof}
For each $z \in Z$,
we have 
$|\ell_{g_z}(x)| \leq 4c$
for all $x \in \cK$.
The map
$S \mapsto \rep(S)$ satisfies~\cref{eqn:mcdia} with $c_0 = \frac{2c}{n}$.
By~\cref{eqn:radL} and~\cite[Lemma 26.2]{shalev2014understanding}, 
because $\|g_z\|_\star \leq L$
for all $z \in Z$,
\begin{align*}
\E_{S \sim \cD^n}[ \rep(S)]
\leq 2 \E_{S \sim \cD^n}[ \rad(\ell_S \circ A_S)]
\leq 2 L \cdot \rad(\cK, n).
\end{align*}
By McDiarmid's inequality, therefore, with probability 
at least $1-\delta$,
$$\rep(S)
\leq 2 L \cdot \rad(\cK, n) + \frac{2c}{n}  \sqrt{\frac{\ln(2/\delta) n}{2}} .$$
\end{proof}

\subsection{Putting it together}

The proof from here is similar to the simpler setup in \cref{thm:main}. 
Assume that $n \geq n_0$, as in the theorem statement.
Next, denote by $E$ the event that there is
$x \in \cK$ so that
$$\hat{F}(x) - \hat{F}(x^\star)\leq \eps$$
but
$$F(x) - F(x^\star)> 40 \eps.$$
The final result then follows if we prove that
\begin{equation}\label{eq:thandelta} \Pr(E)\le \delta .\end{equation}
Towards this, denote by $H$ the event that
\begin{align*}
\rep(S) \leq 2\eps  \ \
\end{align*}
By \cref{cor:radbound},
$$\Pr[\neg H] \leq \frac{\delta}{2} .$$
Next, let $N$ be an $\frac{\eps}{3L}$-net for $\cK$ of minimal size.
Denote by $\hat N$ the set of points $x \in N$ so that
$$\hat{F}(x)- \hat{F}(x^\star) \leq 5 \eps$$
but
$$F(x) -  F(x^\star) > 35 \eps.$$
Because $F$ and $\hat{F}$ are Lipschitz,
the net property implies that if $E$ holds then $\hat N$ is a non-empty set.
By \cref{clm:eventG},
conditioned on $H$, if $x\in \hat N$ then
$$T(x,x^\star)   \geq 16 \eps.$$
On the other hand, by \cref{clm:LsmallG}, for every $x \in \cK$, conditioned on $H$, if $x\in \hat{N}$ then 
$$\hat{T}(x,x^\star) \leq  8 \eps .$$
Let $E_x$ be the event that $\hat T(x,x^\star)\le 8\eps$. We obtain, then:
$$E\cap H  \subseteq \bigcup_{x \in N : T(x,x^\star) > 16 \eps} E_x.$$
By \cref{clm:probG},
if $T(x,x^\star) > 16 \eps$ then
$$\Pr[E_x] \leq \exp \Big( - \frac{\eps  n}{3c} \Big).$$
By the union bound and~\cref{lem:covNum},
\begin{align*}
\Pr[E]
& \leq \Pr[\neg H] +
|N| \exp \Big( - \frac{\eps  n}{3c} \Big) \\
& \leq \frac{\delta}{2}
+ \Big( \frac{12L}{\eps} \Big)^d \exp \Big( - \frac{\eps  n}{3c} \Big) .
\end{align*}
The proof of~\cref{thm:mainG} is now complete 
because
\begin{align*}
\frac{3c^2\ln(4/\delta)}{\epsilon^2}
\ge 
\frac{3c\ln(2/\delta)}{\epsilon}
;\end{align*}
the last inequality is true for $c \geq \eps$, which we can assume because otherwise the result trivially holds.

\bibliographystyle{plainnat}
\bibliography{main}

\appendix

\section{The subgradient sum theorem}
\label{sec:app}

For completeness, we provide a proof of the subgradient sum theorem
(see Rockafellar's book~\cite{rockafellar1997convex}).
We added a couple of figures that illustrate the main idea
(\Cref{fig:f1f2} and~\Cref{fig:z1z2}).
We work in the more general setting when the function is defined on a convex subset of $\R^d$.
For $f : \cK \to \R$ where 
$\cK \subseteq \R^d$ is convex,
define
$$\partial_\cK f(x) = \{ g \in \R^d : \forall y \in \cK \ f(y) \geq f(x) + \ip{g}{y-x}\}.$$

\begin{theorem*}[subgradient sum]
\label{thm:diffSum}
Let $\cK \subseteq \R^d$
be a convex set with non-empty interior.
If $f_1: \R^d \to \R$ and $f_2 : \cK \to \R$ are convex and Lipschitz,
then for all $x \in \cK$,
$$\partial_\cK (f_1+f_2)(x) = \partial_{\R^d} f_1(x) + \partial_\cK f_2(x).$$
\end{theorem*}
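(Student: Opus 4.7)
The $\supseteq$ inclusion is immediate from the definition: if $g_1 \in \partial_{\R^d} f_1(x)$ and $g_2 \in \partial_\cK f_2(x)$, then adding the two defining inequalities yields $g_1 + g_2 \in \partial_\cK(f_1+f_2)(x)$. The content of the theorem is the reverse inclusion: given $g \in \partial_\cK(f_1+f_2)(x)$, I must decompose $g = g_1 + g_2$ with $g_1 \in \partial_{\R^d} f_1(x)$ and $g_2 \in \partial_\cK f_2(x)$.

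My plan is to use a Hahn--Banach separation argument in $\R^d \times \R$. After translation, I may assume $x = 0$ and $f_1(0) = f_2(0) = 0$. Define the convex sets
\[
E_1 = \{(u,s) \in \R^d \times \R : s \geq f_1(u)\}, \qquad
H_2 = \{(u,s) \in \R^d \times \R : u \in \cK,\ s \leq \langle g,u\rangle - f_2(u)\}.
\]
The set $E_1$ is the epigraph of $f_1$ and $H_2$ is a translated hypograph of $f_2$. The hypothesis $g \in \partial_\cK(f_1+f_2)(0)$ reads $f_1(u) + f_2(u) \geq \langle g,u\rangle$ for all $u \in \cK$, which is precisely the statement that $\mathrm{int}(E_1) \cap H_2 = \emptyset$ (and similarly $E_1 \cap \mathrm{int}(H_2) = \emptyset$). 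Because $\cK$ has non-empty interior (and $f_2$ is continuous, being Lipschitz) the set $H_2$ has non-empty interior, so Hahn--Banach produces a nonzero linear functional $(h,\alpha) \in \R^d \times \R$ and a constant $c$ separating $E_1$ from $H_2$. Evaluating at $(0,0) \in E_1 \cap \overline{H_2}$ forces $c = 0$, and letting $s \to +\infty$ inside $E_1$ forces $\alpha \geq 0$.

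The key step I expect to require the most care is ruling out $\alpha = 0$; this is the only place where the hypotheses (non-empty interior of $\cK$, and $f_1$ being defined on all of $\R^d$) are genuinely used. If $\alpha = 0$, the inequality on $E_1$ gives $\langle h, u\rangle \geq 0$ for every $u \in \R^d$ (since the projection of $E_1$ onto $\R^d$ is all of $\R^d$), which forces $h = 0$, contradicting that $(h,\alpha) \neq 0$. Hence $\alpha > 0$, and rescaling I may take $\alpha = 1$.

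With $\alpha = 1$ and $c = 0$, set $g_1 = -h$. The inequality $\langle h,u\rangle + s \geq 0$ on $E_1$, applied with $s = f_1(u)$, yields $f_1(u) \geq \langle g_1, u\rangle$ for all $u \in \R^d$, so $g_1 \in \partial_{\R^d} f_1(0)$. The inequality $\langle h,u\rangle + s \leq 0$ on $H_2$, applied with $s = \langle g,u\rangle - f_2(u)$ and $u \in \cK$, rearranges to $f_2(u) \geq \langle g - g_1, u\rangle$, so $g - g_1 \in \partial_\cK f_2(0)$. Writing $g_2 = g - g_1$ gives the desired decomposition $g = g_1 + g_2$, completing the proof after undoing the translation by $x$.
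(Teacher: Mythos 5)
Your proof is correct and follows essentially the same route as the paper's: separating the epigraph of one function from a (tilted) hypograph of the other in $\R^d \times \R$, ruling out a vertical separating hyperplane, and reading off the two subgradients from the resulting affine minorant/majorant. The only cosmetic difference is that the paper absorbs the linear term $\ip{g}{\cdot - x}$ into $f_1$ before forming the two convex bodies, while you fold it into the definition of $H_2$; and the paper rules out verticality via ``both projections contain $\cK$'' whereas you use that the projection of $E_1$ is all of $\R^d$ — both are valid.
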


This generality helps, for example, to deduce the 
first-order optimality condition we rely on.

\begin{claim*}[first-order optimality]
Let $f : \R^d \to \R$ be Lipschitz and convex.
Let $\cK$ be convex and closed, and let $x_0 \in \argm_\cK f$.
Then, there is $g \in \partial_{\R^d} f(x_0)$ so that
$\ip{g}{x-x_0} \geq 0$ for all $x \in \cK$.
\end{claim*}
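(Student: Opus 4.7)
The plan is to reduce the first-order optimality statement to an application of the subgradient sum theorem by encoding the constraint $x \in \cK$ as a trivial additive term. Specifically, I would take $f_1 = f$ (viewed as a function on all of $\R^d$) and $f_2 \equiv 0$ on $\cK$. Both are convex and Lipschitz, and the non-empty interior hypothesis needed by the sum theorem is harmless in our setting because the paper applies this condition to unit balls of norms, which have non-empty interior.

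With this choice, $f_1 + f_2 = f$ on $\cK$, so the assumption $x_0 \in \argm_\cK f$ translates directly to $0 \in \partial_\cK(f_1 + f_2)(x_0)$: indeed, for every $y \in \cK$, $(f_1+f_2)(y) = f(y) \geq f(x_0) = (f_1+f_2)(x_0) + \ip{0}{y - x_0}$. By the subgradient sum theorem, I can write $0 = g_1 + g_2$ with $g_1 \in \partial_{\R^d} f(x_0)$ and $g_2 \in \partial_\cK f_2(x_0)$. Unpacking the defining inequality for $g_2$ with $f_2 \equiv 0$ gives $\ip{g_2}{y - x_0} \leq 0$ for every $y \in \cK$, i.e., $-g_2$ lies in the negative of the normal cone of $\cK$ at $x_0$.

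Setting $g := g_1 = -g_2$ then yields simultaneously $g \in \partial_{\R^d} f(x_0)$ and $\ip{g}{y - x_0} \geq 0$ for every $y \in \cK$, which is exactly the desired conclusion. The only real work is invoking the subgradient sum theorem; everything else is bookkeeping. The main conceptual point (rather than an obstacle) is recognizing that the constraint $y \in \cK$ can be smuggled into the subgradient calculus by replacing an indicator-style object (which would not be Lipschitz) with the identically-zero function restricted to $\cK$, exploiting the fact that $\partial_\cK$ already builds the restriction into its definition.
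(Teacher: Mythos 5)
Your proof is correct and takes essentially the same route as the paper: both reduce the claim to the subgradient sum theorem by adding a constant function supported on $\cK$ (you use the constant $0$, the paper uses the constant $1$, a purely cosmetic difference) and then read off that $-g$ lies in the normal cone of $\cK$ at $x_0$.
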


\begin{proof}
The point $x_0$ is also a minimizer of $f+1$ as a function on $\cK$.
Therefore, $0 \in \partial_\cK (f+1)(x_0)$.
By the subgradient sum theorem, there is $g \in \partial_{\R^d} f(x_0)$
so that $-g \in \partial_\cK 1(x_0)$.  The latter says that for all $x \in \cK$,
we have $1 \geq 1 + \ip{-g}{x-x_0}$.
\end{proof}

\begin{proof}[Proof of the subgradient sum theorem]
Let $g \in \partial_\cK (f_1+f_2)(x)$.
Define the convex Lipschitz functions $h_1,h_2$:
$$h_1(y) = f_1(y) - f_1(x) - \ip{g}{y-x}$$
and
$$h_2(y) = f_2(y) - f_2(x).$$
So,
$x$ is a minimizer of $h_1+h_2$ in $\cB$,
and $h_1(x)=h_2(x) = 0$.
Define the convex sets
\begin{align*}
Z_1 = \{(y_1,\lambda_1) : y_1 \in \R^d   , \lambda_1 \geq h_1(y_1)\}
\subset \R^d \times \R
\end{align*}
and
\begin{align*}
Z_2 = \{(y_2,\lambda_2) : y_2 \in \cK  , \lambda_2 \leq - h_2(y_2)\}
\subset \R^d \times \R .
\end{align*}
The interiors of these convex sets are disjoint because the minimum of $h_1+h_2$ is zero.
It follows that there is a hyperplane separating them.
This hyperplane is not ``vertical''
because the projection of both to $\R^d$ contains $\cK$;
that is, the separating hyperplane is a graph of an affine function on $\R^d$.
The point $(x,0)$ belongs to both sets,
so the hyperplane must contain it.
Overall, there is $v \in \R^d$ so that
$$\forall (y_1,\lambda_1) \in Z_1 \ \ \ \ \ip{v}{y_1-x} \leq \lambda_1$$
and
$$\forall (y_2,\lambda_2) \in Z_2 \ \ \ \ \ip{v}{y_2-x} \geq \lambda_2.$$
In other words, for all $y \in \R^d$,
$$\ip{v}{y-x}  \leq h_1(y) $$
and for all $y \in \cK$,
$$\ip{v}{y-x} \geq - h_2(y).$$
Going back to $f_1,f_2$, we get that for all $y \in \R^d$ ,
$$\ip{v}{y-x}  \leq
f_1(y) - f_1(x) - \ip{g}{y-x}$$
and for all $y \in \cK$
$$\ip{-v}{y-x}  \leq f_2(y) - f_2(x).$$
The first inequality says that $g+v \in \partial_{\R^d} f_1(x)$
and the second that $-v \in \partial_\cK f_2(x)$.
\end{proof}

\begin{figure}
  \includegraphics[width=8cm]{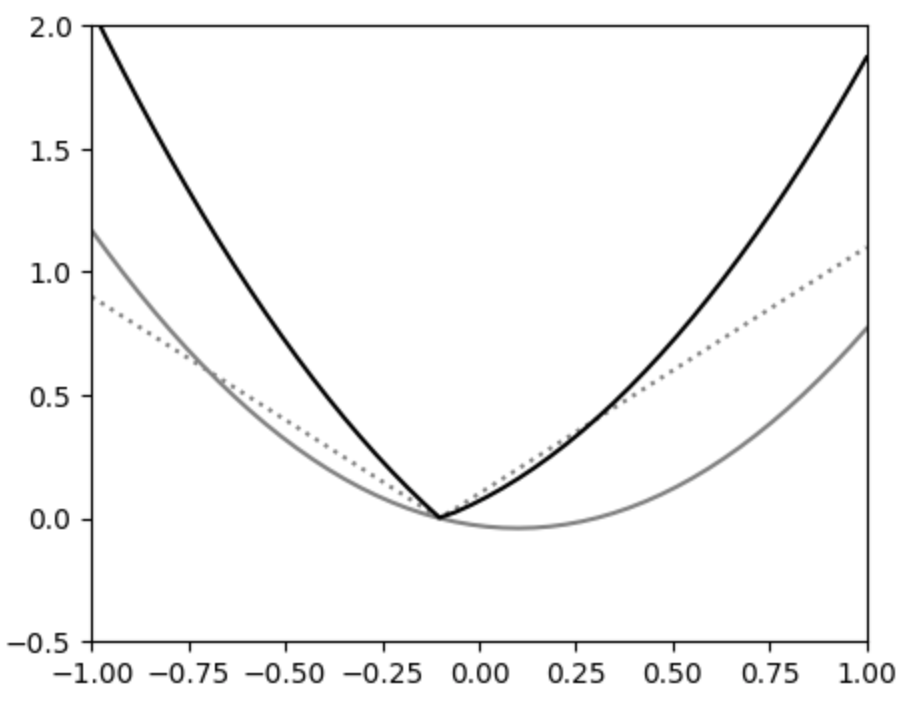}
  \caption{Part I of the illustration of the proof of the subgradient sum theorem.
  The function $f_1(x) = (x-0.1)^2 - \tfrac{1}{25}$ is the gray line.
  The function $f_2(x) = |x+0.1|$ is the dashed line.
  Their sum $f_1+f_2$ is the solid black line.
  The point of interest is $x = -\tfrac{1}{10}$,
  which is the minimum of $f_1+f_2$. \\ \
 }
  \label{fig:f1f2}
  \includegraphics[width=8cm]{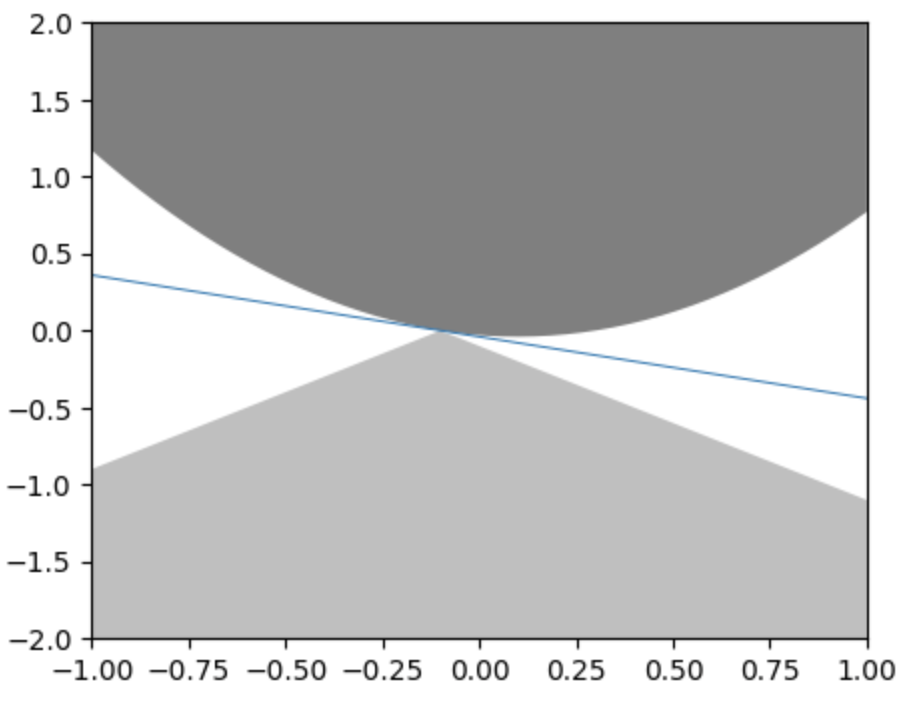}
  \caption{Part II of the illustration of the proof of the subgradient sum theorem.
The convex set $Z_1$ is the epigraph of $f_1$.
The convex set $Z_2$ is ``minus'' the epigraph of $f_2$.
The point of interest is $x = -\tfrac{1}{10}$,
and $0 \in \partial_\cB (f_1+f_2)(x)$.
The two convex sets meet at $(x,0)$.
There is a line separating $Z_1$ and $Z_2$ going through
$(x,0)$. This line shows that we can write $0$ as
a subgradient of $f_1$ plus a subgradient of $f_2$.
 }
  \label{fig:z1z2}
\end{figure}

\end{document}